\documentclass{article} % For LaTeX2e
\usepackage{iclr2026_conference,times}
\usepackage{tikz}
\usetikzlibrary{positioning}

\usepackage{graphicx}
\usepackage{subcaption} 
\usepackage{amsthm}
\usepackage{enumitem}

\theoremstyle{plain}
\newtheorem{theorem}{Theorem}
\newtheorem{lemma}[theorem]{Lemma}
\newtheorem{corollary}[theorem]{Corollary}
\newtheorem{proposition}[theorem]{Proposition}

\theoremstyle{definition}
\newtheorem{definition}[theorem]{Definition}

\theoremstyle{remark}
\newtheorem{remark}[theorem]{\textbf{Remark}}

\usepackage{amsmath,amsfonts,bm}

\def\eqref#1{equation~\ref{#1}}
\def\1{\bm{1}}

\DeclareMathAlphabet{\mathsfit}{\encodingdefault}{\sfdefault}{m}{sl}
\SetMathAlphabet{\mathsfit}{bold}{\encodingdefault}{\sfdefault}{bx}{n}

\usepackage{hyperref}
\usepackage{url}

\usepackage{hyperref}
\usepackage{url}

\title{Directional Influence Function: Estimating Training Data Influence in Constrained Learning}

\author{
Xin Wang \\
Department of Civil and Environmental Engineering\\
University of Washington\\
Seattle, WA, USA\\
\texttt{}
\And
R. Tyrrell Rockafellar \\
Department of Mathematics\\
University of Washington\\
Seattle, WA, USA\\
\texttt{}
\And
Xuegang (Jeff) Ban\thanks{Corresponding author.} \\
Department of Civil and Environmental Engineering\\
University of Washington\\
Seattle, WA, USA\\
\texttt{banx@uw.edu}
}

\iclrfinalcopy % Uncomment for camera-ready version, but NOT for submission.
\begin{document}

\maketitle

\begin{abstract}

As constrained learning becomes increasingly common, models are trained under explicit feasibility requirements to enforce fairness, safety, robustness, regularization, and physics or logic constraints. Understanding how training samples influence the model solution (e.g., learned parameters) is crucial for interpretability and robustness. The classical influence function (IF) estimates sample contributions via local sensitivity analysis, measuring how the solution changes when a specific training sample is perturbed or removed. However, IF becomes unreliable in constrained settings: data perturbations can reshape both the objective and the feasible region, leading to estimates that violate feasibility. In response, we propose the Directional Influence Function (DIF), a novel estimator that explicitly incorporates these constraints into influence estimation. DIF formulates the optimality conditions of constrained learning as a variational inequality (VI) and analyzes how perturbing training data affects this VI. We validate DIF on constrained linear regression and demonstrate that it recovers leave-one-out retraining results, whereas IF and penalty-based IF exhibit significant bias. We further apply DIF to fairness-constrained CNNs, where DIF accurately predicts test loss changes under data removal and aligns closely with actual retraining. Our results establish DIF as an efficient and reliable tool for data attribution in constrained learning.

\end{abstract}

\section{Introduction}
Understanding how individual training samples influence model predictions, also known as data attribution~\citep{pruthi2020estimating,feldman2020neural,lin2024diffusion}, is fundamental for interpreting model behavior, model correction, and data error debugging. Although retraining the model after removing a sample and comparing the resulting change in model solution (learned parameters) provides ground-truth data influence, this approach is computationally expensive for modern deep learning models. A widely adopted alternative is to approximate sample influence by studying how the solution responds to small data perturbations. As a representative method, the influence function (IF), originally developed in robust statistics~\citep{hampel1974influence} and later adapted to machine learning~\citep{koh2017understanding,feldman2020neural,zhang2024timeinf}, instantiates this idea by differentiating the solution with respect to (w.r.t.) small data perturbations. While these influence estimators work for unconstrained learning problems, they often fail when applied to constrained learning. The latter refers to learning tasks constrained by domain knowledge, such as in physics-informed neural networks (PINNs), or those constrained by embedding requirements, including fairness, safety, and robustness~\citep{hounie2023resilient,li2023double,garcia2015comprehensive,xia2025fairtp}. The empirical formulation of constrained learning is~\citep{chamon2020probably}:
% \begin{equation}
% \label{CL}
% \begin{aligned}
% \min _\theta & \frac{1}{N_0} \sum_{i=1}^{N_0}\ell_0\left(z_i^{(0)}, \theta\right)  \\
% \text { s.t. } & \frac{1}{N_j} \sum_{i=1}^{N_j} \ell_j\left(z_i^{(j)}, \theta\right)\leq \tau_j, \quad j=1, \ldots, m.
% \end{aligned}
% \tag{CL}
% \end{equation} 
\begin{equation}
\label{CL}
\begin{aligned}
\bar{\theta} :=& \arg\min_{\theta \in \mathbb{R}^d} \quad  \frac{1}{N_0} \sum_{i=1}^{N_0} \ell_0\left(z_i^{(0)}, \theta\right) \\
\text{s.t.} \quad & \frac{1}{N_j} \sum_{i=1}^{N_j} \ell_j\left(z_i^{(j)}, \theta\right) \leq \tau_j, \quad j=1, \ldots, m,
\end{aligned}
% \tag{CL}
\end{equation}
where $\theta \in \mathbb{R}^d$ denotes the model parameters to be optimized, and $\bar{\theta}$ is the optimal solution. The function $\ell_0\left(z_i^{(0)}, \theta\right)$ represents the primary loss evaluated on the training data point $z_i^{(0)}$. Each constraint $j \in\{1, \ldots, m\}$ is represented by an auxiliary loss function $\ell_j\left(z_i^{(j)}, \theta\right)$ evaluated over a separate dataset $\left\{z_i^{(j)}\right\}_{i=1}^{N_j}$. The constant $\tau_j$ specifies the upper bound allowed for the $j$th constraint. The influence estimators aim to quantify the change in solution, denoted by $\Delta\theta$, when specific training data points participating in the objective function or constraints (e.g., $z^{(1)}_1$) are removed. However, current IF-based approaches fail in constrained learning for two primary reasons. First, constrained problems~(\ref{CL}) require that the $\Delta\theta$ remain within the feasible region, while IF methods ignore the constraints, not to mention that the feasible region itself may also be altered by data perturbation. Applying existing IF cannot guarantee that the estimated solution change is feasible. Second, IF primarily relies on the gradient of the optimal solution $\bar{\theta}$ w.r.t. data perturbations. In constrained learning, the solution can be only directionally differentiable. The full derivatives do not necessarily exist, rendering IF estimators invalid. This issue arises because the solution updates must adhere to the feasible region when constraints are active, often requiring projections to maintain feasibility. Additionally, data removal can change the active constraint set, leading to sudden shifts in the optimal solution. See the problem~(\ref{eq:toy}) for a concrete example. 

To overcome the above limitations, this paper introduces the \textit{Directional Influence Function} (DIF), an influence function designed for constrained learning, to estimate the impact of training points participating in either the loss function or the constraints on the model solution. The DIF estimates the change in model solution through a directional derivative approach, aiming to address the following question: 
\begin{quote}
    \textit{How does the solution of constrained learning change when data points are removed from either the objective loss function or the constraints?}
\end{quote}
Our main contributions are as follows:
(1) We formalize data attribution for constrained learning by casting the optimality conditions as a variational inequality (VI) and performing local sensitivity analysis of this VI.
(2) DIF quantifies the effect of data perturbations on model solutions via directional derivatives, thereby addressing the non-smoothness of solution changes induced by constraints.
(3) We propose an approach that computes DIF by solving a quadratic program (QP) and prove that, when all constraints are inactive (i.e., their KKT multipliers are zero), DIF reduces to the classical IF.

The remainder of this paper is organized as follows.
Section~\ref{sec:2} formally defines our problem and illustrates the failure of IF in a constrained learning setting through a linear regression example.
Section~\ref{sec:3} introduces the proposed DIF.
Section~\ref{sec:4} analyzes the impact of data perturbations on the VI-formulated optimality conditions, introduces a QP approach to compute DIF, and presents our main theoretical results.
Section~\ref{sec:5} evaluates DIF on a constrained linear regression task and a constrained CNN model. Throughout this paper, we assume that all functions involved are twice continuously differentiable ($C^2$).
\section{Problem Formulation}\label{sec:2}
Let $Z^{(0)}=\left\{z_i^{(0)}\right\}_{i=1}^{N_0}$ be the dataset for the objective function and $Z^{(j)}=\left\{z_i^{(j)}\right\}_{i=1}^{N_j}$ the dataset for the $j$-th constraint. We begin by examining how the solution $\bar{\theta}$ changes when a selected subset $Z^r \subset \bigcup_{j=0}^m Z^{(j)}
$ is removed. This removal is modeled via a perturbation formulation, where each point in $Z^r$ is assigned a small weight $\varepsilon_k: \varepsilon_0$ for points contributing to the objective, and $\varepsilon_j, j \in\{1, \ldots, m\}$ for the points contributing to the $j$-th constraint. This formulation, referred to as \textit{Perturbed Constrained Learning} (PCL), is defined as follows\footnote{To simplify our discussion, this constrained learning formulation does not explicitly represent equality constraints, as equalities can be transformed into inequalities.}:
\begin{equation}
\label{perturb CL}
\begin{aligned}
\min _\theta \quad & \frac{1}{N_0} \sum_{i=1}^{N_0} \ell_0\left(z_i^{(0)}, \theta\right) + \varepsilon_0 \sum_{z_i^{(0)}\in Z^r}\ell_0\left(z_{i}^{(0)}, \theta\right) \\
\text {s.t.} \quad & \frac{1}{N_j} \sum_{i=1}^{N_j} \ell_j\left(z_i^{(j)}, \theta\right) +  \varepsilon_j \sum_{z_i^{(j)}\in Z^r} \ell_j \left(z^{(j)}_{i}, \theta\right) \leq \tau_j, j=1, \ldots, m.
\end{aligned}
% \tag{PCL}
\end{equation}
% Removing data points in $Z^r$ is equivalent to shifting $\varepsilon_k$ from $\bar{\varepsilon}_k=0$ to $\hat{\varepsilon}_k=-\frac{1}{N_k}$.

Let $\varepsilon=\left[\varepsilon_0, \ldots, \varepsilon_m\right]$ denote the perturbation vector. When $\varepsilon=\mathbf{0}$, problem~(\ref{perturb CL}) reduces to the problem~(\ref{CL}). Shifting $\varepsilon$ from $\bar{\varepsilon}=[0,0, \ldots, 0]$ to $\hat{\varepsilon}=$ $\left[-\frac{1}{N_0},-\frac{1}{N_1}, \ldots,-\frac{1}{N_m}\right]$ corresponds to removing $Z^r$ from the constrained learning problem~(\ref{CL}). We treat $\theta$ as a function of $\varepsilon$, denoted by $\theta(\varepsilon)$, and define the solution change as $\Delta \theta=\theta(\hat{\varepsilon})-\theta(\bar{\varepsilon})$. By definition, $\theta(\bar{\varepsilon})=\bar{\theta}$ is the solution to problem~(\ref{CL}).

\subsection{Failure of IF in Constrained Learning}
\textbf{Toy example}. We utilize a $\ell_1$-constrained least-squares example to demonstrate the failure of the IF in constrained learning. Consider the following toy example, where $\theta=\left[\theta_1, \theta_2\right]$ represents the model parameters. 
% The given data points are:
% $\left(x_1, x_2, x_3\right)=([1,0],[1,0],[0,1]), \quad\left[y_1, y_2, y_3\right]=[1,0,1] .
% $
\begin{equation}
\label{eq:toy}
\begin{aligned}
    \min_{\theta} \quad & \frac{1}{3} (\theta^\top x_1 - y_1)^2 + \frac{1}{3} (\theta^\top x_2 - y_2)^2 + \frac{1}{3} (\theta^\top x_3 - y_3)^2\\& + \varepsilon \cdot (\theta^\top x_2 - y_2)^2\\
    % \quad & =\frac{1}{3} (\theta_1 - 1)^2 + \frac{1}{3} (\theta_1 - 0)^2 + \frac{1}{3} (\theta_2 - 1)^2 + \varepsilon \cdot \theta_1^2 \\
    \text{s.t.} \quad & \|\theta\|_1 \leq 1.
\end{aligned}
\end{equation}
The data is provided as follows:
\begin{equation*}
\left[\begin{array}{l}
x_1 \\
x_2 \\
x_3
\end{array}\right]=\left[\begin{array}{ll}
1 & 0 \\
1 & 0 \\
0 & 1
\end{array}\right],\left[\begin{array}{l}
y_1 \\
y_2 \\
y_3
\end{array}\right]=\left[\begin{array}{l}
1 \\
0 \\
1/2
\end{array}\right]
\end{equation*}
The explicit solution of this problem is:
\begin{equation}
 \theta(\varepsilon)=\operatorname{Proj}_{\|\cdot\|_1 \leq 1}\left[\left(\frac{1}{2+3\varepsilon}, 0.5\right)\right]
\end{equation}
Removing $(x_2,y_2)$ is equivalent to shift $\varepsilon$ from $0$ to $-\frac{1}{3}$. The IF defines the impact caused by data removal using the derivative of $\theta(\varepsilon)$:
\begin{equation*}
IF(z_2)=\left.\frac{d {\theta}(\varepsilon)}{d \varepsilon}\right|_{\varepsilon=0}=-H_{\bar{\theta}}^{-1} \nabla_\theta \ell_0(x_2,y_2, \bar{\theta}),
\end{equation*}
where $z_2=(x_2,y_2)$, $H_{\bar{\theta}}=\frac{1}{3}\sum_{i=1}^3 \nabla_\theta^2 \ell_0(x_i,y_i, {\bar\theta})$ is the Hessian, and $\ell_0(x_2,y_2, {\bar\theta})=(\bar\theta^\top x_2 - y_2)^2$.
IF then estimates the change $\Delta \theta$ ignoring the constraint by:
\begin{equation*}   
\theta(-\frac{1}{3})-\theta(0)\approx-\frac{1}{3}\left.\frac{d {\theta}(\varepsilon)}{d \varepsilon}\right|_{\varepsilon=0}.\end{equation*}

\begin{figure}[!htbp]
    \centering
    \includegraphics[width=0.8\linewidth]{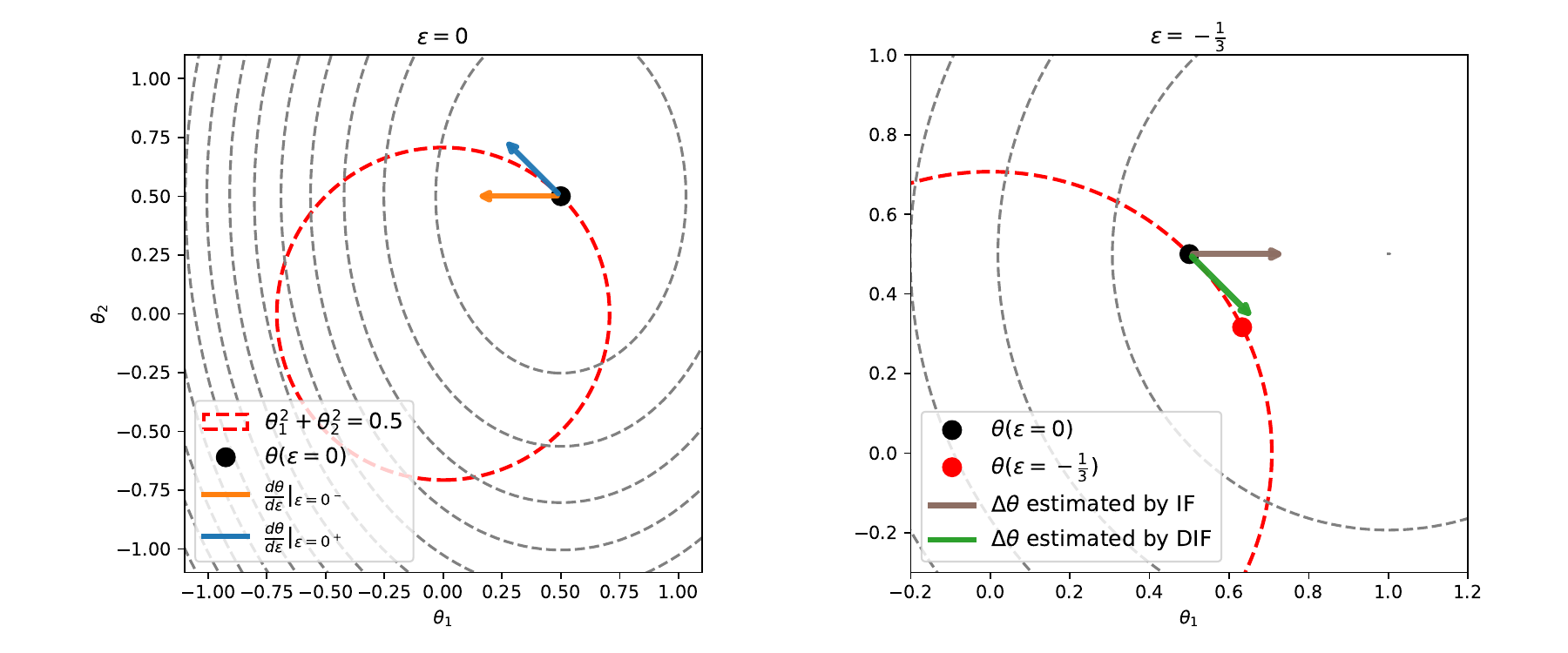}
    \caption{Loss landscape of the toy example when $\varepsilon=0$ and $\varepsilon=-\frac{1}{3}$. The left plot illustrates the non-existence of the derivative of $\theta(\varepsilon)$ at $\varepsilon=0$ as $\left.\frac{d \theta(\varepsilon)}{d \varepsilon}\right|_{\varepsilon=0^+}\neq \left.\frac{d \theta(\varepsilon)}{d \varepsilon}\right|_{\varepsilon=0^-}$. The right plot shows that the IF estimation of $\Delta\theta$ 
    fails to account for the feasible region, whereas our DIF accurately predicts the solution change direction and closely matches the ground truth.}
    \label{fig:toy example}
\end{figure}

However, as shown in the left panel of figure~\ref{fig:toy example}, the $\left.\frac{d {\theta}(\varepsilon)}{d \varepsilon}\right|_{\varepsilon=0}$ is not well defined because the left derivative and right derivative are inconsistent. When $\varepsilon>0$, the point $\left(\frac{1}{2+3\varepsilon}, 0.5\right)$ satisfies $\|\cdot\|_1 < 1$ and thus lies in the interior of the feasible region. 
In contrast, when $\varepsilon<0$, its $\ell_1$-norm exceeds $1$, so the solution is projected onto the $\ell_1$-boundary, which results in two different one-sided derivatives at $\varepsilon=0$.

Moreover, the right panel of figure~\ref{fig:toy example} compares the ground truth $\Delta\theta$ with the change estimated by the IF. When $\varepsilon$ shifts from 0 to $-\frac{1}{3}$, $\theta$ moves along the boundary of the feasible region, whereas the estimated change by the IF significantly deviates from the ground truth due to its failure to consider the constraint. In contrast, our estimator DIF (see Definition~\ref{def:DIF}), which accounts for the geometry of the feasible region, closely matches the ground truth $\Delta \theta$ along the boundary. The derivation of DIF for this toy example is provided in Appendix~\ref{appendixoftoy}.
\section{Directional Influence Function (DIF)}\label{sec:3}
To address the non-smoothness of the solution map $\theta(\varepsilon)$ w.r.t.  perturbation $\varepsilon$ and the inability of IF to handle constrained scenarios, this section proposes an alternative approach called the Directional Influence Function (DIF). We start with the definition of the classical IF.

\begin{definition}[Influence Function]
The classical \emph{IF} applies only to unconstrained problems. The IF is defined as the derivative of $\theta$ w.r.t. the $\varepsilon_0$, i.e.,
\begin{equation}
    IF(\bar{\theta};Z^r)=\left.\frac{d \theta(\varepsilon_0)}{d \varepsilon_0}\right|_{\varepsilon_0=0}.
\end{equation}
\end{definition}

% \begin{definition}[Directional Influence Function]\label{def:DIF}

\begin{definition}[Directional Influence Function]\label{def:DIF}
The \emph{DIF} of the data subset \( Z^r\) is defined as the limiting directional derivative of the solution map \( \theta(\varepsilon): \mathbb{R}^{m+1} \rightarrow \mathbb{R}^d \) at \( \bar{\varepsilon} = \mathbf{0} \), if the limit exists\footnote{This generalized definition accounts for potential non-smoothness of $\theta(\varepsilon)$ by allowing the direction to vary in a neighborhood of $\Delta \bar{\varepsilon}$, as is common in nonsmooth analysis\citep{clarke1990optimization,rockafellar1998variational}.}
\begin{align}
    \mathrm{DIF}\left(\bar{\theta}; Z^r\right)
    &:= D\theta(\bar{\varepsilon};\Delta \bar{\varepsilon})=\lim_{\substack{t \searrow 0^+\\ \Delta {\varepsilon} \to \Delta \bar{\varepsilon}}} \frac{\theta(\bar{\varepsilon} + t \Delta \bar{\varepsilon}) - \theta(\bar{\varepsilon})}{t},
\end{align}
where \( \bar{\theta} = \theta(\bar{\varepsilon}) \) denotes the model solution before perturbation. $ \Delta \bar{\varepsilon} = \hat\varepsilon-\bar\varepsilon= \left[-\frac{1}{N_0}, -\frac{1}{N_1}, \ldots, -\frac{1}{N_m}\right]$ is the perturbation direction, where $ \Delta \bar{\varepsilon}$ corresponds to removing the data subset $Z^r$ from the constrained learning~(\ref{CL}).
\end{definition}

Recall that a function $f: X \rightarrow Y$ is called positively homogeneous if $
f(\alpha x)=\alpha f(x) \quad \forall \alpha \geq 0, \forall x \in X.
$
\begin{corollary}\label{corollary2}
DIF is positively homogeneous,i.e.,$$\alpha D\theta(\bar{\varepsilon};\Delta\bar{\varepsilon})=D\theta(\bar{\varepsilon};\alpha\Delta\bar{\varepsilon}),\quad\alpha\in \mathbb{R}^+$$
\end{corollary}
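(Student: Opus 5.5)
The plan is to prove positive homogeneity straight from the limit in Definition~\ref{def:DIF}, using only a rescaling of the step size $t$. Fix $\alpha>0$ and assume $D\theta(\bar{\varepsilon};\Delta\bar{\varepsilon})$ exists; the claim is that $D\theta(\bar{\varepsilon};\alpha\Delta\bar{\varepsilon})$ then exists and equals $\alpha D\theta(\bar{\varepsilon};\Delta\bar{\varepsilon})$. The case $\alpha=0$ is immediate if $\mathbb{R}^+$ is meant to include zero: with $\Delta\varepsilon=\mathbf{0}$ every difference quotient is zero, so I will handle $\alpha=0$ in one line and concentrate on $\alpha>0$.

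The key step is a reparametrization. Take any sequences $t_k\searrow 0^+$ and $\Delta\varepsilon_k\to\alpha\Delta\bar{\varepsilon}$, and set $s_k:=\alpha t_k$ and $d_k:=\Delta\varepsilon_k/\alpha$. Then $s_k\searrow 0^+$ and $d_k\to\Delta\bar{\varepsilon}$. Moreover $t_k\Delta\varepsilon_k=s_k d_k$, which gives
\begin{equation*}
\frac{\theta(\bar{\varepsilon}+t_k\Delta\varepsilon_k)-\theta(\bar{\varepsilon})}{t_k}
=\alpha\,\frac{\theta(\bar{\varepsilon}+s_k d_k)-\theta(\bar{\varepsilon})}{s_k}.
\end{equation*}
By existence of the limit defining $D\theta(\bar{\varepsilon};\Delta\bar{\varepsilon})$, the right-hand side converges to $\alpha D\theta(\bar{\varepsilon};\Delta\bar{\varepsilon})$. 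Since the sequences were arbitrary, the sequential characterization of the joint limit in $(t,\Delta\varepsilon)$ shows that $D\theta(\bar{\varepsilon};\alpha\Delta\bar{\varepsilon})$ exists and equals $\alpha D\theta(\bar{\varepsilon};\Delta\bar{\varepsilon})$.

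For the step that in principle needs care, I would point out that the map $(t,\Delta\varepsilon)\mapsto(\alpha t,\Delta\varepsilon/\alpha)$ is a bijection between neighborhoods of $(0^+,\alpha\Delta\bar{\varepsilon})$ and of $(0^+,\Delta\bar{\varepsilon})$. The argument is therefore symmetric and no information is lost. I would also remark that the definition of DIF presupposes that the limit exists, so existence transfers in both directions.

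There is no real obstacle. The one point worth stating explicitly is that the definition uses the varying-direction (Hadamard-type) limit, and the rescaled directions $d_k$ still converge to $\Delta\bar{\varepsilon}$, so the same definition applies verbatim.
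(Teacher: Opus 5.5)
Your argument for $\alpha>0$ is correct and is the paper's argument: rescale the step by $s=\alpha t$ and the direction by $1/\alpha$ (the paper writes $u'=\alpha v'$). Your sequential version, with the explicit bijection $(t,\Delta\varepsilon)\mapsto(\alpha t,\Delta\varepsilon/\alpha)$, makes clear that this reparametrization covers every way of approaching $\alpha\Delta\bar\varepsilon$.

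The one step that does not hold as written is $\alpha=0$. Under the varying-direction limit of Definition~\ref{def:DIF}, $D\theta(\bar\varepsilon;\mathbf{0})$ is the limit over $t\searrow 0^+$ \emph{and} $\Delta\varepsilon\to\mathbf{0}$. This is the very feature you yourself flag as needing care. So the directions are only close to $\mathbf{0}$, and the quotients $\bigl(\theta(\bar\varepsilon+t\Delta\varepsilon)-\theta(\bar\varepsilon)\bigr)/t$ are not identically zero. For instance, suppose $\theta$ grows like $\sqrt{\cdot}$ along some ray $e$. Then the choice $\Delta\varepsilon=\sqrt{t}\,e$ makes the quotients blow up like $t^{-1/4}$. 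Existence of $D\theta(\bar\varepsilon;\Delta\bar\varepsilon)$ in the single direction $\Delta\bar\varepsilon$ does not rule this out. What you need is a calmness bound $\|\theta(\bar\varepsilon+td)-\theta(\bar\varepsilon)\|\le C\,t\|d\|$ for small $t\|d\|$, which bounds the quotient by $C\|d\|\to 0$. Under the hypotheses of Theorem~\ref{thm:5}, this is exactly the estimate (\ref{bigo}) in the appendix. Alternatively, restrict the claim to $\alpha>0$, which is the only case used in (\ref{DIF estimator}). The paper's own proof has the same soft spot, since it simply asserts that the $v'\to 0$ limit equals $0$. Your stated reason, ``every difference quotient is zero,'' is valid only for a fixed-direction derivative.
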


\begin{proof}
See Appendix~\ref{app:positivehomo} for the full proof.
\end{proof}

When $\varepsilon$ is perturbed from $\bar{\varepsilon}$ by a finite $\Delta \bar{\varepsilon}$, IF estimates $\Delta \theta$ via a first-order Taylor expansion. IF pertains to unconstrained learning and ignores constraints; thus
\begin{equation}\label{IF estimator}
    \Delta \theta \approx \left. \frac{d \theta(\varepsilon_0)}{d \varepsilon_0} \right|_{\varepsilon = \bar{\varepsilon}} \cdot \Delta \bar{\varepsilon}_0.
\end{equation}
This approximation, as demonstrated in the previous section, fails in the presence of constraints. Instead, DIF utilizes a directional derivative approach:
\begin{equation}\label{DIF estimator}
\begin{aligned}
    \Delta \theta 
    &\approx D\theta\left(\bar{\varepsilon}; \frac{\Delta \bar{\varepsilon}}{\|\Delta \bar{\varepsilon}\|}\right) \cdot \|\Delta \bar{\varepsilon}\| = D\theta\left(\bar{\varepsilon}; \Delta \bar{\varepsilon}\right).
\end{aligned}
\end{equation}

where $D\theta(\bar{\varepsilon};\frac{\Delta \bar{\varepsilon}}{\|\Delta \bar{\varepsilon}\|})$ is the directional derivative, and the second equation follows from  Corollary~\ref{corollary2}.
In what follows, we will demonstrate how to compute 
$D\theta\left(\bar{\varepsilon};\Delta \bar{\varepsilon}\right)$ by solving a linearized variational inequality system. We use $\Delta\hat {\theta}$ to denote the DIF-based estimate of $\Delta\theta$.

\section{Deriving DIF via Sensitivity Analysis of Optimality Conditions}\label{sec:4}

In this section, we show how DIF naturally arises from the sensitivity analysis of the optimality conditions of the constrained learning problem. 
Specifically, we first formulate the optimality system as a VI, then linearize this VI to obtain an auxiliary VI, and finally convert it into a QP whose solution yields the desired directional change in the model solution.

% This section is to estimate the change in $\theta$ when $\varepsilon$ is perturbed from $\bar{\varepsilon}$ by a finite $\Delta \bar{\varepsilon}$.
% While the IF estimates $\Delta \theta$ via a first-order Taylor expansion:
% \begin{equation}\label{IF estimator}
%     \Delta \theta \approx \left. \frac{d \theta(\varepsilon)}{d \varepsilon} \right|_{\varepsilon = \bar{\varepsilon}} \cdot \Delta \bar{\varepsilon}.
% \end{equation}
% This approximation, as demonstrated in the previous section, fails in the presence of constraints. Instead, DIF utilizes a directional derivative approach:
% \begin{equation}\label{DIF estimator}
% \begin{aligned}
%     \Delta \theta 
%     &\approx D\theta\left(\bar{\varepsilon}; \frac{\Delta \bar{\varepsilon}}{\|\Delta \bar{\varepsilon}\|}\right) \cdot \|\Delta \bar{\varepsilon}\| = D\theta\left(\bar{\varepsilon}; \Delta \bar{\varepsilon}\right) 
%     \stackrel{\text { def }}{=} \operatorname{DIF}\left(\bar{\theta};Z^r\right)
% \end{aligned}
% \end{equation}

% where $D\theta(\bar{\varepsilon};\frac{\Delta \bar{\varepsilon}}{\|\Delta \bar{\varepsilon}\|})$ is the directional derivative, and the second equation follows from  Corollary~\ref{corollary2}.
% In what follows, we will demonstrate how to compute 
% $D\theta\left(\bar{\varepsilon};\Delta \bar{\varepsilon}\right)$ by solving a linearized variational inequality system. We use $\Delta\hat {\theta}$ to denote the estimation of $\Delta\theta$.
Recall that the constrained learning is typically solved using Lagrangian dual approaches, such as the augmented Lagrangian method, primal-dual methods~\citep{chamon2020probably,ahmed2022pylon,nandwani2019primal}. To quantify the impact of data downweighting on the solution of~( \ref{perturb CL}), we introduce the perturbation $\varepsilon$ into the Lagrangian function, enabling us to analyze how the perturbation affects the optimality system. The Lagrangian function of problem~(\ref{perturb CL}) is:  
\begin{equation}
\begin{aligned}
&{L}(\varepsilon,\theta, \lambda)=  \frac{1}{N_0} \sum_{i=1}^{N_0} \ell_0\left(z_i^{(0)}, \theta\right)+\varepsilon_0 \sum_{z_i^{(0)}\in Z^r} \ell_0\left(z_{i}^{(0)}, \theta\right) \\
& +\sum_{j=1}^m \lambda_j\left[\frac{1}{N_j} \sum_{i=1}^{N_j} \ell_j\left(z_i^{(j)}, \theta\right)+\varepsilon_j \sum_{z_i^{(j)}\in Z^r}\ell_j\left(z_{i}^{(j)}, \theta\right)-\tau_j\right],
\end{aligned}
\end{equation}
where $\lambda=[\lambda_1,\lambda_2,\dots,\lambda_m], \lambda_j\geq 0 $ denotes the dual variables. Let $(\theta(\varepsilon),\lambda(\varepsilon))$ be the primal--dual solution for a given $\varepsilon$. From Section~\ref{sec:2}, we use $(\bar\theta,\bar\lambda)=(\theta(\bar\varepsilon),\lambda(\bar\varepsilon))$ to denote the optimal solution of the constrained learning problem~(\ref{CL}). 

At $\theta=\bar{\theta}$, we partition the constraints into the active set $I_{\text {Active }}$ and the inactive set $I_{\text {Inactive}}$, where $I_{\text {Active }}=I_{\text{Binding}}\cup I_{\text{Non-binding}}$.

\begin{align*}
I_{\text{Active}} &:= \left\{ j \,\middle|\, 
\frac{1}{N_j}\sum_{i=1}^{N_j}\ell_j(z_i^{(j)},\bar\theta)=\tau_j \right\},
&
I_{\text{Inactive}} &:= \left\{ j \,\middle|\, 
\frac{1}{N_j}\sum_{i=1}^{N_j}\ell_j(z_i^{(j)},\bar\theta)<\tau_j \right\}, \\[3pt]
I_{\text{Binding}} &:= \{ j\in I_{\text{Active}} \mid \bar\lambda_j>0 \},
&
I_{\text{Non-binding}} &:= \{ j\in I_{\text{Active}} \mid \bar\lambda_j=0 \}.
\end{align*}

As discussed earlier, varying $\varepsilon$ from $\bar\varepsilon$ to $\hat\varepsilon$ corresponds to removing $Z^r$. In what follows, we study the sensitivity of the optimality conditions with respect to $\varepsilon$.  We begin by examining the optimality condition of problem~(\ref{perturb CL}).

\subsection{Optimality Condition of Problem ~(\ref{perturb CL})}
Rather than relying on the Karush-Kuhn-Tucker (KKT) conditions, we adopt a VI formulation to characterize the optimality conditions. This VI form allows us to analyze solution perturbations through the lens of generalized differential calculus~\citep{rockafellar1998variational}.  The optimality condition of problem~(\ref{perturb CL}) is:
\begin{equation}\label{VI}
-\nabla_\theta L(\varepsilon, \theta, \lambda) \in N_{\mathbb{R}^d}(\theta), \quad \nabla_\lambda L(\varepsilon, \theta, \lambda) \in N_{\mathbb{R}_{+}^m}(\lambda),
\end{equation}
where $N_{\mathbb{R}^d}(\theta),N_{\mathbb{R}_{+}^m}(\lambda)$ denote the Normal Cone.
\begin{definition}[Normal Cone;\citet{dontchev2009implicit}]\label{def:normal cone}
Let $C \subseteq \mathbb{R}^d$ be a closed convex set. The normal cone to $C$ at a point $x \in C$ is defined as:
\[
N_C(x) = \{v \in \mathbb{R}^d : \langle v, y - x \rangle \leq 0, \ \forall y \in C \}.
\]
\end{definition}

%  \begin{theorem}\label{thm:5}
% Assume the following two conditions hold:
% \begin{itemize}
%     \item[1.] The gradients $\frac{1}{N_j} \sum_{i=1}^{N_j} \nabla_\theta \ell_j(z_i^{(j)}, \bar{\theta})$ associated with the active constraints are linearly independent.
    
%     \item[2.] For any $\Delta\theta \neq 0$ such that $\Delta\theta \perp \frac{1}{N_j} \sum_{i=1}^{N_j} \nabla_\theta \ell_j(z_i^{(j)}, \bar{\theta})$ for all $j \in I_{\text{Binding}}$, it holds that $\langle \Delta \theta, \nabla^2_{\theta\theta} L(\bar{\varepsilon}, \bar{\theta}, \bar{\lambda}) \Delta \theta \rangle > 0$.
% \end{itemize}
% Then the DIF exists.
% \end{theorem}
\subsection{Linear Approximation of Variational Inequality}
The classical derivation of the IF relies on the Taylor expansion of the first-order optimality condition, which does not naturally handle constraints. In constrained learning, a natural choice for the optimality condition is the KKT system. Yet, estimating how $\varepsilon$ affects the KKT system is challenging due to its mixed system of equations and inequalities.

DIF adopts an elegant alternative: representing the optimality condition as a VI (see formulation~(\ref{VI})) and applying a first-order approximation directly to the VI.
The VI (\ref{VI}) can be compactly written as:
\begin{align}\label{new VI}
&f(\varepsilon, \theta, \lambda) + N_E(\theta, \lambda) \ni \mathbf{0}, \\ 
&\text{where}\quad 
f(\varepsilon, \theta, \lambda) = 
\begin{pmatrix}
\nabla_\theta L(\varepsilon, \theta, \lambda) \\
- \nabla_\lambda L(\varepsilon, \theta, \lambda)
\end{pmatrix}, E = \mathbb{R}^d \times \mathbb{R}_+^m \notag
\end{align}
By definition, the solution $(\bar{\varepsilon},\bar{\theta},\bar{\lambda})$ satisfies the equation~(\ref{new VI}). When $\varepsilon$ is perturbed by $\Delta \bar{\varepsilon}$, estimating $\Delta \theta$ and $\Delta \lambda$ amounts to finding $\Delta \theta$ and $\Delta \lambda$ such that 

\begin{equation}\label{perturb VI}
    f(\bar{\varepsilon}+\Delta \bar{\varepsilon}, \bar{\theta}+\Delta \theta, \bar{\lambda}+\Delta \lambda) + N_E(\bar{\theta}+\Delta \theta, \bar{\lambda}+\Delta \lambda) \ni \mathbf{0}
\end{equation}
The linear approximation of (\ref{perturb VI}) is 
\begin{align}
&f(\bar{\varepsilon}, \bar{\theta}, \bar{\lambda}) 
+ \nabla_{\varepsilon} f(\bar{\varepsilon}, \bar{\theta}, \bar{\lambda})\, \Delta \bar{\varepsilon}
+ \nabla_{(\theta, \lambda)} f(\bar{\varepsilon}, \bar{\theta}, \bar{\lambda})\, \left[\begin{array}{l}\Delta \hat{\theta} \\ \Delta \hat{\lambda}\end{array}\right] \nonumber \\
&+ N_{E}((\bar{\theta}, \bar{\lambda})+ (\Delta \hat\theta, \Delta \hat\lambda)) \ni \mathbf{0}. \label{vi-linearized}
\end{align}
Here, $(\Delta \hat{\theta}, \Delta \hat{\lambda})$, which satisfies~(\ref{vi-linearized}), serves as an approximation of $(\Delta \theta, \Delta \lambda)$.

Denote 
\begin{equation*}
\begin{aligned}
& A:=\nabla_{(\theta, \lambda)} f(\bar{\varepsilon}, \bar{\theta}, \bar{\lambda})=\left[\begin{array}{l}
\nabla_\theta^2 L(\bar{\varepsilon}, \bar{\theta}, \bar{\lambda}), \nabla_{\theta \lambda} L(\bar{\varepsilon}, \bar{\theta}, \bar{\lambda}) \\
-\nabla_{\theta\lambda} L(\bar{\varepsilon}, \bar{\theta}, \bar{\lambda}),-\nabla^2_{\lambda}L(\bar{\varepsilon}, \bar{\theta}, \bar{\lambda})
\end{array}\right],\\
& \mu:=\nabla_{\varepsilon} f(\bar{\varepsilon}, \bar{\theta}, \bar{\lambda})=\left[\begin{array}{l}
\nabla_{\theta \varepsilon} L(\bar{\varepsilon}, \bar{\theta}, \bar{\lambda}) \\
-\nabla_{\lambda \varepsilon} L(\bar{\varepsilon}, \bar{\theta}, \bar{\lambda})
\end{array}\right]\\ &\Delta \hat\eta=\left[\begin{array}{l}\Delta \hat{\theta} \\ \Delta \hat{\lambda}\end{array}\right]
\end{aligned}
\end{equation*}
The linearized VI~(\ref{vi-linearized}) can be expressed in the form:
\begin{equation}\label{compressed LVI}
  f(\bar{\varepsilon}, \bar{\theta}, \bar{\lambda})+\mu\Delta \bar{\varepsilon}+A\Delta \hat\eta+N_E((\bar{\theta},\bar{\lambda})+\Delta \hat\eta) \ni 0.
\end{equation}
Note that all terms in the linearized VI~(\ref{compressed LVI}), except for $\Delta \hat\eta$—which is the variable we aim to estimate—can be precomputed from the known solution $\bar{\theta}$ and $\bar{\lambda}$. The linearized VI~(\ref{compressed LVI}) can be simplified into the following Auxiliary VI~(\ref{aux vi}).
\begin{proposition}[Auxiliary VI]\label{pro:aux vi}
   The solution $\Delta \hat\eta$ to the linearized VI~(\ref{compressed LVI}) satisfies 
\begin{equation}\label{aux vi}
  \mu\Delta \bar{\varepsilon}+A\Delta \hat\eta+N_K(\Delta \hat\eta) \ni 0,
\end{equation}
where $K=\mathbb{R}^d \times D$. $D$ is a space defined as
\begin{equation*}
D := \left\{ \Delta \hat\lambda \in \mathbb{R}^m \;\middle|\;
\begin{aligned}
&\Delta\hat \lambda_j \in \mathbb{R} \quad \text{for }  j \in I_{\text{Binding}}, \\
&\Delta\hat \lambda_j \geq 0 \quad \text{for } j \in I_{\text{Non-binding}}, \\
&\Delta \hat\lambda_j = 0 \quad \text{for } j \in I_{\text{Inactive}}
\end{aligned}
\right\}.
\end{equation*}
\end{proposition}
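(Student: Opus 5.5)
The idea is to use the reduction lemma for normal cones to polyhedral sets: for a polyhedral convex set $E$ and a point $\bar x\in E$, there is a neighborhood of the origin on which $N_E(\bar x+w)=N_{T_E(\bar x)}(w)$. Combined with $f(\bar\varepsilon,\bar\theta,\bar\lambda)\in -N_E(\bar\theta,\bar\lambda)$, this lets us replace $N_E(\bar\eta+\Delta\hat\eta)$ in (\ref{compressed LVI}) by the normal cone to a cone built from the critical directions. Here $\bar\eta=(\bar\theta,\bar\lambda)$. Everything separates across coordinates because $E=\mathbb{R}^d\times\mathbb{R}^m_+$ is a product.

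\emph{Step 1: evaluate $f$ at the base point.} Write $\bar f:=f(\bar\varepsilon,\bar\theta,\bar\lambda)$.
\begin{itemize}
\item The $\theta$-block is $\nabla_\theta L(\bar\varepsilon,\bar\theta,\bar\lambda)=0$, which is stationarity.
\item The $\lambda$-block is $-g_j(\bar\theta)$, where $g_j(\theta):=\frac{1}{N_j}\sum_i\ell_j(z_i^{(j)},\theta)-\tau_j$.
\item Complementary slackness gives $g_j(\bar\theta)=0$ for $j\in I_{\text{Active}}$, so $\bar f_{\lambda_j}=0$ there.
\item For $j\in I_{\text{Inactive}}$ we have $-g_j(\bar\theta)>0$ and $\bar\lambda_j=0$.
\end{itemize}

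\emph{Step 2: compute the normal cone componentwise.} Write $N_E(\bar\eta+\Delta\hat\eta)=\{0\}^d\times\prod_j N_{\mathbb{R}_+}(\bar\lambda_j+\Delta\hat\lambda_j)$ and treat each $j$ separately.
\begin{itemize}
\item For $j\in I_{\text{Binding}}$, $\bar\lambda_j>0$. For small $\Delta\hat\lambda_j$ the point stays interior, so the cone is $\{0\}=N_{\mathbb{R}}(\Delta\hat\lambda_j)$.
\item For $j\in I_{\text{Non-binding}}$, $\bar\lambda_j=0$, so the cone equals $N_{\mathbb{R}_+}(\Delta\hat\lambda_j)$ exactly.
\item For $j\in I_{\text{Inactive}}$, the $j$-th row of (\ref{compressed LVI}) reads $-g_j(\bar\theta)+(\mu\Delta\bar\varepsilon+A\Delta\hat\eta)_j+v_j=0$ with $v_j\in N_{\mathbb{R}_+}(\Delta\hat\lambda_j)$, hence $v_j\le 0$. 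For a small enough solution the first two terms sum to something strictly positive, so no $v_j\le 0$ can cancel them when $\Delta\hat\lambda_j>0$ (which would force $v_j=0$). This forces $\Delta\hat\lambda_j=0$.
\item With $\Delta\hat\lambda_j=0$ forced, the inactive residual $-g_j(\bar\theta)+(\cdots)_j$ is absorbed by $v_j$. Equivalently, the inactive components are eliminated and the corresponding row is dropped, which matches the constraint $\Delta\hat\lambda_j=0$ in $D$. Under that elimination $N_D$ in the $j$-th coordinate is all of $\mathbb{R}$.
\end{itemize}

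\emph{Step 3: assemble.} Taking the product of the componentwise cones gives $\{0\}^d\times N_D(\Delta\hat\lambda)=N_K(\Delta\hat\eta)$. Since $\bar f$ vanishes on the $\theta$-block and on the active $\lambda$-rows, and the inactive rows are absorbed as in Step 2, (\ref{compressed LVI}) becomes (\ref{aux vi}).

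The main obstacle is that the localization is only valid for $\Delta\hat\eta$ (equivalently $\Delta\bar\varepsilon$) small. This is what keeps $\bar\lambda_j+\Delta\hat\lambda_j>0$ for binding constraints and keeps the inactive residual sign-definite. I would handle it by proving the statement for $t\Delta\bar\varepsilon$ with $t$ small and then invoking positive homogeneity: solutions of (\ref{aux vi}) scale linearly in $t$, since $K$ is a cone, in the spirit of Corollary~\ref{corollary2}. This is precisely the regime relevant to the directional derivative in Definition~\ref{def:DIF}. I would also cite the polyhedral reduction lemma from \citet{dontchev2009implicit} to make Step 2 rigorous in one line.
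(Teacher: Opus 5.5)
Your proposal is correct and is essentially the paper's argument. The paper likewise uses $\nabla_\theta L(\bar\varepsilon,\bar\theta,\bar\lambda)=0$ and splits the $\lambda$-rows into inactive, non-binding and binding cases, with smallness of $(\Delta\bar\varepsilon,\Delta\hat\theta,\Delta\hat\lambda)$ forcing $\Delta\hat\lambda_j=0$ on inactive rows and keeping $\bar\lambda_j+\Delta\hat\lambda_j>0$ on binding rows. The paper differs from you only in writing both VIs out as equality/inequality/complementarity systems and checking that they coincide, which gives both directions, rather than computing $N_E(\bar\eta+\Delta\hat\eta)$ componentwise.

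One caveat on your last paragraph. Positive homogeneity of (\ref{aux vi}) cannot carry the conclusion from $t\Delta\bar\varepsilon$ back to $\Delta\bar\varepsilon$, because (\ref{compressed LVI}) is not homogeneous: for a large step, an inactive multiplier can become positive in (\ref{compressed LVI}). So, as the paper does, take smallness of $\Delta\bar\varepsilon$ and of the solution $\Delta\hat\eta$ as a standing hypothesis of the statement.
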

\begin{proof}
    See Appendix~\ref{proof of propostion6}
\end{proof}
We refer to (\ref{aux vi}) as the \textit{Auxiliary VI}, as it admits the same solution as the linearized VI~(\ref{compressed LVI}), but has a simpler form.

\begin{proposition}\label{proposition7}
Let $\Delta \hat\eta=[\Delta \hat{\theta} ; \Delta \hat{\lambda}]$ denote the solution to Auxiliary VI~(\ref{aux vi}). Then, $\Delta \hat{\theta}$ exactly recovers the directional derivative of the solution mapping $\theta(\varepsilon)$ at $\bar \varepsilon$ along $\Delta \bar{\varepsilon}$, i.e.,
\begin{equation} 
\Delta \hat{\theta}=D \theta(\bar{\varepsilon} ; \Delta \bar{\varepsilon}).
\end{equation}
\end{proposition}
\begin{proof}
    See Appendix~\ref{proofof7}.
\end{proof}
\begin{remark}
By Proposition~\ref{proposition7}, computing the DIF reduces to solving the Auxiliary VI~(\ref{aux vi}).
\end{remark}
The following theorem establishes the existence of DIF.
\begin{theorem}\label{thm:5}
Assume the following regularity conditions hold:
\begin{enumerate}[label=\textbf{Assumption \arabic*.}, leftmargin=*, align=left]
    \item (LICQ: Linear Independence Constraint Qualification) 
    The gradients $\frac{1}{N_j} \sum_{i=1}^{N_j} \nabla_\theta \ell_j(z_i^{(j)}, \bar{\theta})$ associated with the active constraints ($j \in I_{\text{Activate}}$) are linearly independent.

    \item (SOSC: Second-Order Sufficient Condition) 
    For any $\Delta\theta \neq 0$ such that $\Delta\theta \perp \frac{1}{N_j} \sum_{i=1}^{N_j} \nabla_\theta \ell_j(z_i^{(j)}, \bar{\theta})$ for all $j \in I_{\text{Binding}}$, it holds that $\langle \Delta \theta, \nabla^2_{\theta\theta} L(\bar{\varepsilon}, \bar{\theta}, \bar{\lambda}) \Delta \theta \rangle > 0$.
\end{enumerate}
Then the directional derivative $D\theta(\bar{\varepsilon};\Delta\bar{\varepsilon})$ exists.
\end{theorem}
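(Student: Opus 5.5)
The proof will rest on Robinson's theory of strong regularity for parametric variational inequalities, together with the implicit function theorem for generalized equations (Dontchev--Rockafellar). We view the optimality system (\ref{new VI}) as a generalized equation $f(\varepsilon,\eta)+N_E(\eta)\ni 0$ in $\eta=(\theta,\lambda)$, with parameter $\varepsilon$. Here $f$ is $C^1$ in $(\varepsilon,\eta)$, because all losses are $C^2$. The plan has three steps. First, show that Assumptions 1--2 make this generalized equation \emph{strongly regular} at $(\bar\varepsilon,\bar\eta)$. Second, use the implicit function theorem to obtain a single-valued, Lipschitz, directionally differentiable localization $\varepsilon\mapsto\eta(\varepsilon)$. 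Third, identify its directional derivative with the solution of the Auxiliary VI (\ref{aux vi}), as Proposition~\ref{proposition7} already asserts.

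\textbf{Step 1: strong regularity.} By Robinson's theorem, strong regularity is equivalent to the following property of the partially linearized map $\eta\mapsto f(\bar\varepsilon,\bar\eta)+A(\eta-\bar\eta)+N_E(\eta)$: its inverse has a single-valued Lipschitz localization around $(0,\bar\eta)$. By the reduction of the linearization to the critical cone in Proposition~\ref{pro:aux vi}, it suffices to show that $w\mapsto Aw+N_K(w)$ has a globally single-valued Lipschitz inverse. Here $K=\mathbb{R}^d\times D$ and $D$ encodes the binding, non-binding and inactive multipliers. This is the content of the known equivalence for nonlinear programs: under LICQ, strong regularity of the KKT system holds if and only if the \emph{strong} SOSC holds. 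Strong SOSC means positive definiteness of $\nabla^2_{\theta\theta}L$ on the subspace orthogonal to the gradients of the binding constraints, which is exactly Assumption 2. To verify this directly, I would show that the linear complementarity problem $Aw+N_K(w)\ni -q$ has a unique solution for every $q$, depending Lipschitz-continuously on $q$. The argument splits into cases according to which non-binding multipliers are positive. In each case the problem reduces to a linear system with the KKT-type matrix
\[
\begin{pmatrix} H & G_J^\top\\ G_J & 0\end{pmatrix},
\]
where $H=\nabla^2_{\theta\theta}L$ and $G_J$ stacks the constraint gradients indexed by $J\supseteq I_{\text{Binding}}$. Each such matrix is nonsingular: LICQ gives $G_J$ full row rank, and Assumption 2 gives positive definiteness of $H$ on $\ker G_{I_{\text{Binding}}}\supseteq\ker G_J$. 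Nonsingularity is not quite enough, so I would add the coherent-orientation/P-matrix-type argument: the Schur complements $G_{J}H_{J}^{-1}G_{J}^\top$ restricted to the non-binding indices have positive principal minors. This is the standard route to uniqueness of the LCP solution, and the resulting solution map is piecewise linear, hence Lipschitz.

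\textbf{Step 2: the implicit function theorem.} By Robinson's implicit function theorem (Dontchev--Rockafellar, Thm.~2B.1 and its directional-differentiability companion 2E), strong regularity together with $C^1$ dependence of $f$ on $\varepsilon$ yields neighborhoods on which $\varepsilon\mapsto(\theta(\varepsilon),\lambda(\varepsilon))$ is single-valued and Lipschitz. Moreover, this map has a first-order approximation at $\bar\varepsilon$ given by $\Delta\varepsilon\mapsto\sigma(-\mu\Delta\varepsilon)$, where $\sigma$ is the Lipschitz, positively homogeneous solution map of $Aw+N_K(w)\ni q$. The limit in Definition~\ref{def:DIF} allows the direction $\Delta\varepsilon\to\Delta\bar\varepsilon$ to vary. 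We handle this by combining two facts. The first-order approximation gives
\[
\theta(\bar\varepsilon+t\Delta\varepsilon)-\bar\theta = t\,\sigma_\theta(-\mu\Delta\varepsilon)+o(t),
\]
uniformly for $\Delta\varepsilon$ near $\Delta\bar\varepsilon$. In addition, $\sigma$ is Lipschitz. Together these give the existence of $D\theta(\bar\varepsilon;\Delta\bar\varepsilon)=\sigma_\theta(-\mu\Delta\bar\varepsilon)$. We also need the localization to coincide with the global minimizer $\theta(\varepsilon)$. For this, note that Assumptions 1--2 make $\bar\theta$ a strict local minimizer, and the KKT point of the perturbed problem near it is unique by strong regularity. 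We therefore interpret $\theta(\varepsilon)$ as this local solution branch.

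\textbf{Main obstacle.} The hardest step is Step 1, specifically the uniqueness and Lipschitzness of the solution across all the combinatorial pieces created by the non-binding constraints. I would try first to cite the Jongen/Kojima/Robinson characterization, namely that LICQ plus strong SOSC imply strong regularity. If a self-contained argument is wanted, I would instead prove uniqueness by monotonicity. Take two solutions $w,w'$, subtract the two inclusions, and pair the difference with $w-w'$. The skew part of $A$ cancels, and the normal-cone terms are monotone. This forces $\langle\Delta\theta,H\Delta\theta\rangle\le0$, with $\Delta\theta$ orthogonal to the binding gradients (the binding multipliers are free). Assumption 2 then gives $\Delta\theta=0$, and LICQ gives $\Delta\lambda=0$. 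Lipschitz continuity of $\sigma$ follows because $\sigma$ is piecewise linear.
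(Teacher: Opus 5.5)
Your proposal is correct and follows essentially the paper's route. You reduce the linearized optimality system to the critical cone $K=\mathbb{R}^d\times D$, show that $(A+N_K)^{-1}$ is single-valued, Lipschitz and positively homogeneous (your monotonicity argument is the paper's Lemma~\ref{lemma15}), and then use the resulting first-order approximation to identify $D\theta(\bar\varepsilon;\Delta\bar\varepsilon)$ with the $\theta$-part of the Auxiliary VI solution. Your version is slightly tighter in three places: you invoke LICQ to get $\Delta\lambda=0$ once $\Delta\theta=0$, which the paper's positivity lemma silently needs; you handle the varying direction $\Delta\varepsilon\to\Delta\bar\varepsilon$ of Definition~\ref{def:DIF} through the uniform $o(t)$ term; and your citation of Robinson's LICQ-plus-strong-SOSC theorem supplies existence of solutions for every right-hand side, which the bare uniqueness argument (yours or the paper's) does not give on its own.
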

\begin{proof}
    See Appendix~\ref{proof of thm5}.
\end{proof}

% \begin{theorem}\label{thm:5}
% Assume the following two conditions hold:
% \begin{itemize}
%     \item[1.] The gradients $\frac{1}{N_j} \sum_{i=1}^{N_j} \nabla_\theta \ell_j(z_i^{(j)}, \bar{\theta})$ associated with the active constraints are linearly independent.
    
%     \item[2.] For any $\Delta\theta \neq 0$ such that $\Delta\theta \perp \frac{1}{N_j} \sum_{i=1}^{N_j} \nabla_\theta \ell_j(z_i^{(j)}, \bar{\theta})$ for all $j \in I_{\text{Binding}}$, it holds that $\langle \Delta \theta, \nabla^2_{\theta\theta} L(\bar{\varepsilon}, \bar{\theta}, \bar{\lambda}) \Delta \theta \rangle > 0$.
% \end{itemize}
% Then the  directional derivative $D \theta(\bar{\varepsilon} ; \Delta \bar{\varepsilon})
% $ exists
% \end{theorem}

\begin{proposition}
\label{thm:DIF-error}
$\Delta\theta=\theta(\bar\varepsilon+\Delta \bar\varepsilon)-\theta(\bar\varepsilon)$ is the ground truth and $\Delta\hat\theta$ is the estimation. For any perturbation $\Delta\bar\varepsilon$ sufficiently small, there exists an $M$ s.t.
\begin{equation}
\|\Delta\theta - \Delta\hat\theta\|
\le 
M\|\Delta\bar\varepsilon\|
\end{equation}
\end{proposition}

% \begin{lemma}\label{lem:gap-theta}
% There exists a neighborhood of $0$ such that
% \begin{equation}\label{eq:gap-small-o}
% \big\|\Delta\theta-\Delta\hat\theta\big\| \;=\; o\!\left(\,\|\Delta\bar\varepsilon\|\,\right)
% \qquad\text{as } \Delta\bar\varepsilon\to 0 .
% \end{equation}
% Moreover, if $f$ is $C^2$ in a neighborhood of $(\bar\varepsilon,\bar\eta)$, then
% \begin{equation}\label{eq:gap-big-O}
% \big\|\Delta\theta-\Delta\hat\theta\big\|
% \;\le\; c\,\|\Delta\bar\varepsilon\|^2
% \quad\text{for all sufficiently small }\|\Delta\bar\varepsilon\|,
% \end{equation}
% for some constant $c>0$ independent of $\Delta\bar\varepsilon$.
% \end{lemma}

\subsection{Computing DIF via Quadratic Programming}

Proposition~\ref{proposition7} implies that computing the DIF reduces to solving the Auxiliary VI~(\ref{aux vi}). In this section, we construct a QP~\ref{QP} whose optimality is exactly the Auxiliary VI~(\ref{aux vi}). By solving this QP, we can compute DIF and estimate the directional change in solution, i.e., $\Delta \theta$.
Figure~\ref{fig:flow} summarizes the derivation in this section, illustrating the connections between the problem~(\ref{perturb CL}), VI~(\ref{new VI}), Auxiliary VI~(\ref{aux vi}), and QP~(\ref{QP}) formulations.

\begin{figure}[h]
\centering
\begin{tikzpicture}[
  box/.style={draw, minimum width=1.5cm, minimum height=0.8cm, align=center, font=\normalsize},
  node distance=1.8cm and 2.0cm,thick, 
  every node/.style={font=\small}
]

% Nodes
\node[box] (pcl) {PCL~(\ref{perturb CL})};
\node[box, below=of pcl] (vi) {VI~(\ref{new VI})};
\node[box, right=of vi] (auxvi) {Auxiliary VI~(\ref{aux vi})};
\node[box, above=of auxvi] (qp) {\shortstack{QP}~(\ref{QP})};

% Vertical arrows
\draw[dashed,thick,<->] (pcl) -- (vi) node[midway, left] {\shortstack{Optimality\\Condition}};
\draw[dashed,->] (pcl) -- (vi) node[midway, right] {\shortstack{Equivalent}};
\draw[dashed,thick,<->] (auxvi) -- (qp) node[midway, right] {\shortstack{Theorem~\ref{thm:Auxiliary problem}\\Equivalent}};
% \draw[dashed,->] (qp) -- (auxvi) node[midway, left] {Theorem 2};

% Horizontal arrow
\draw[dashed,->] (vi) -- (auxvi)
  node[midway, above=1.2mm] {{Linearization}}
  node[midway, below=1.2mm] {{Proposition~\ref{pro:aux vi}}};

% DIF arrows with spacing and balance
\draw[->] (qp) -- (pcl)
  node[midway, above=1.2mm] {$\Delta \theta$}
  node[midway, below=1.2mm] {$\Delta \lambda$};

\end{tikzpicture}
\caption{The relationship between (\ref{perturb CL}) and (\ref{QP}).}
\label{fig:flow}
\end{figure}

We define the following QP:

\begin{equation}
\label{QP}
\begin{aligned}
\underset{\omega}{\min}\quad &
L(\bar{\varepsilon}, \bar{\theta}, \bar{\lambda}) 
+ \big\langle \nabla_{\theta \varepsilon} L(\bar{\varepsilon}, \bar{\theta}, \bar{\lambda})  \Delta \bar\varepsilon,\; \omega \big\rangle 
+ \frac{1}{2}\big\langle \omega,\; \nabla_{\theta\theta}^2 L(\bar{\varepsilon}, \bar{\theta}, \bar{\lambda})\, \omega \big\rangle\\[4pt]
\text{s.t.}\quad &
\frac{1}{N_j}\sum_{i=1}^{N_j} \ell_j(z_i^{(j)}, \bar{\theta})
+ \frac{1}{N_j}\sum_{i=1}^{N_j} \nabla_\theta \ell_j(z_i^{(j)}, \bar{\theta})\!\cdot\!\omega
+ \sum_{z_i^{(j)}\in Z^r} \Delta \bar\varepsilon_j \ell_j(z_{i}^{(j)}, \bar{\theta})
- \tau_j
\begin{cases}
= 0, & j \in I_{\text{binding}},\\[3pt]
\le 0, & j \in I_{\text{non-binding}},\\[3pt]
\text{free}, & j \in I_{\text{Inactive}}.
\end{cases}
\end{aligned}
\end{equation}

\begin{theorem}[Auxiliary problem]\label{thm:Auxiliary problem}
Let $\left(\omega^{\star}, \zeta^{\star}\right)$ denote the optimal primal-dual solution to the QP~(\ref{QP}). Then $\left(w^{\star}, \zeta^{\star}\right)$ also satisfies the VI~(\ref{aux vi}). In particular, the QP~(\ref{QP}) and the auxiliary VI~(\ref{aux vi}) admit the same solution pair under substitution $(w^*, \zeta^*)=(\Delta \hat{\theta}, \Delta \hat{\lambda})$.
\end{theorem}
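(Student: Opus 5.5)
The plan is to write out the KKT conditions of the QP~(\ref{QP}) and check, block by block, that they coincide with the Auxiliary VI~(\ref{aux vi}) once we set $(\omega^\star,\zeta^\star)=(\Delta\hat\theta,\Delta\hat\lambda)$. The QP has only linear constraints, so a constraint qualification holds automatically and its KKT conditions are necessary for optimality. Under SOSC (Assumption~2 of Theorem~\ref{thm:5}) the objective is convex on the relevant subspace, so the KKT conditions are also sufficient. Throughout, let $g_j:=\frac{1}{N_j}\sum_i\nabla_\theta\ell_j(z_i^{(j)},\bar\theta)$ and let $c_j$ denote the constraint value of the QP at $\omega$. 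Since $\theta$ is free, $N_K(\Delta\hat\eta)=\{0\}\times N_D(\Delta\hat\lambda)$. The Auxiliary VI~(\ref{aux vi}) therefore splits into a $\theta$-row, which must hold with equality, and a $\lambda$-row, which is a complementarity condition over $D$.

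\textbf{The $\theta$-row.} Attach multipliers $\zeta_j$ to the binding constraints (free sign), to the non-binding constraints ($\zeta_j\ge0$), and set $\zeta_j=0$ for inactive constraints, since those are free in the QP. Stationarity of the QP Lagrangian gives
\[
\nabla_{\theta\varepsilon}L\,\Delta\bar\varepsilon+\nabla^2_{\theta\theta}L\,\omega+\sum_j\zeta_j g_j=0 .
\]
Because $\nabla_{\theta\lambda}L(\bar\varepsilon,\bar\theta,\bar\lambda)$ has columns $g_j$ (the $\varepsilon$-terms vanish at $\bar\varepsilon=0$), this is exactly the first block row of $\mu\Delta\bar\varepsilon+A\Delta\hat\eta=0$, including the sign conventions in the definitions of $A$ and $\mu$.

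\textbf{The $\lambda$-row.} Since $\nabla^2_\lambda L=0$ and $\nabla_{\lambda\varepsilon}L$ has $j$-th entry $\sum_{Z^r}\ell_j(z_i^{(j)},\bar\theta)\Delta\bar\varepsilon_j$ along the direction, the second block row of $\mu\Delta\bar\varepsilon+A\Delta\hat\eta$ equals $-\bigl(g_j^\top\omega+\sum_{Z^r}\Delta\bar\varepsilon_j\ell_j\bigr)$. For active $j$ we have $\frac1{N_j}\sum_i\ell_j=\tau_j$, so this is $-c_j$. We then need $c_j\in N_D(\zeta)_j$ componentwise, and we check each case.
\begin{itemize}
\item[(i)] For binding $j$, $D$ is free in that coordinate, so the normal cone component is $\{0\}$. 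The requirement becomes $c_j=0$, which is the equality constraint of the QP.
\item[(ii)] For non-binding $j$, the coordinate of $D$ is $\mathbb{R}_+$, so the requirement is $\zeta_j\ge0$, $c_j\le0$, and $\zeta_j c_j=0$. This is exactly QP dual feasibility, primal feasibility, and complementary slackness.
\item[(iii)] For inactive $j$, the coordinate of $D$ is $\{0\}$, whose normal cone is $\mathbb{R}$. The condition is therefore vacuous, and it matches $\zeta_j=0$ together with the free QP row.
\end{itemize}
Matching these three cases establishes the claim in both directions.

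\textbf{Main obstacle.} The delicate step is getting the signs and constant terms right. Specifically, one must confirm that $-\nabla_\lambda L$, when linearized, produces exactly $-c_j$, using that $\tau_j$ cancels only on active constraints. One must also confirm that the QP objective carries no $\lambda$-dependent linear term beyond those already encoded in the constraints. If a sign mismatch appears, I would resolve it by rewriting the VI with $N_{\mathbb{R}^m_+}$ and checking the complementarity conditions directly. Existence of an optimal $(\omega^\star,\zeta^\star)$ follows from SOSC (coercivity of the quadratic on the feasible affine directions) combined with LICQ, which also makes $\zeta^\star$ unique.
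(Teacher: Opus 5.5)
Your proposal is correct and follows essentially the same route as the paper. Both write out the KKT system of the QP~(\ref{QP}) and match it block by block with the Auxiliary VI~(\ref{aux vi}): the $\theta$-row comes from QP stationarity, and the $\lambda$-row is checked case by case over binding, non-binding and inactive constraints. Your treatment of the inactive case is cleaner than the paper's appeal to complementary slackness, and your converse direction supplies the ``same solution pair'' claim that the paper asserts but proves only one way. That converse relies on SOSC-based sufficiency and LICQ-based feasibility and uniqueness, which are hypotheses of Theorem~\ref{thm:5} that the statement itself does not list.
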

\begin{proof}
    See Appendix~\ref{proofof8}. 
\end{proof}

\begin{proposition}\label{prop:dif-if}
If no constraints are active at the solution $\bar{\theta}$, i.e., $I_{\text{Active}}=\emptyset$,  
then the DIF coincides with the classical IF.
\end{proposition}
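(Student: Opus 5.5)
The plan is to compute the DIF directly from the Auxiliary VI~(\ref{aux vi}), or equivalently from the QP~(\ref{QP}) via Theorem~\ref{thm:Auxiliary problem}, and check that it reduces to the IF formula. If $I_{\text{Active}}=\emptyset$, then every constraint lies in $I_{\text{Inactive}}$. Complementary slackness at $(\bar\theta,\bar\lambda)$ then forces $\bar\lambda=\mathbf{0}$.

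As a consequence, the Lagrangian at the base point collapses to the perturbed objective. Concretely,
\begin{equation*}
\nabla^2_{\theta\theta}L(\bar\varepsilon,\bar\theta,\bar\lambda)=H_{\bar\theta}:=\frac{1}{N_0}\sum_{i=1}^{N_0}\nabla^2_\theta\ell_0(z_i^{(0)},\bar\theta).
\end{equation*}
Similarly, the mixed derivative term $\nabla_{\theta\varepsilon}L\,\Delta\bar\varepsilon$ reduces to $\Delta\bar\varepsilon_0\sum_{z_i^{(0)}\in Z^r}\nabla_\theta\ell_0(z_i^{(0)},\bar\theta)$. The reason is that the $\varepsilon_j$ terms for $j\ge 1$ enter $\nabla_\theta L$ only multiplied by $\bar\lambda_j=0$.

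Next I simplify the feasible set. The set $D$ in Proposition~\ref{pro:aux vi} becomes $\{\mathbf{0}\}$, because every index is inactive. Hence $K=\mathbb{R}^d\times\{0\}$ and $N_K(\Delta\hat\eta)=\{0\}\times\mathbb{R}^m$. The $\theta$-block of (\ref{aux vi}) is then the linear equation
\begin{equation*}
H_{\bar\theta}\Delta\hat\theta+\Delta\bar\varepsilon_0\sum_{z_i^{(0)}\in Z^r}\nabla_\theta\ell_0(z_i^{(0)},\bar\theta)=0.
\end{equation*}
The $\lambda$-block is absorbed by the normal cone, and $\Delta\hat\lambda=0$.

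The same conclusion follows from the QP~(\ref{QP}). All constraints there are marked \emph{free}, so the QP is an unconstrained quadratic with the same stationarity equation.

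With $I_{\text{Binding}}=\emptyset$, the SOSC of Theorem~\ref{thm:5} says exactly that $H_{\bar\theta}\succ0$. Therefore the equation above has the unique solution
\begin{equation*}
\Delta\hat\theta=-\Delta\bar\varepsilon_0\,H_{\bar\theta}^{-1}\sum_{z_i^{(0)}\in Z^r}\nabla_\theta\ell_0(z_i^{(0)},\bar\theta).
\end{equation*}
By Proposition~\ref{proposition7}, this $\Delta\hat\theta$ equals $D\theta(\bar\varepsilon;\Delta\bar\varepsilon)$.

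The last step is to identify this expression with the IF. Locally the constrained problem is unconstrained, since strict inequalities persist under small perturbations by continuity. So the implicit function theorem applied to $\nabla_\theta L(\varepsilon,\theta,0)=0$ gives a differentiable $\theta(\varepsilon_0)$ with
\begin{equation*}
\frac{d\theta}{d\varepsilon_0}=-H_{\bar\theta}^{-1}\sum_{z_i^{(0)}\in Z^r}\nabla_\theta\ell_0(z_i^{(0)},\bar\theta).
\end{equation*}
Hence $\mathrm{DIF}=IF(\bar\theta;Z^r)\cdot\Delta\bar\varepsilon_0$. This matches the IF estimator~(\ref{IF estimator}) and shows that the two coincide.

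The main obstacle is interpretive rather than technical: the constraint datasets still shift $\varepsilon_j$. I would stress that these shifts do not affect $\Delta\hat\theta$, because with $\bar\lambda=0$ and all constraints inactive they enter neither the objective of the QP nor any binding condition. Consequently the directional derivative is linear in $\Delta\bar\varepsilon$ and depends only on its $\varepsilon_0$ component.
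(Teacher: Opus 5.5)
Your argument is correct. With $I_{\text{Active}}=\emptyset$ you get $\bar\lambda=\mathbf{0}$ and $D=\{0\}$, so the auxiliary VI~(\ref{aux vi}) (equivalently, the QP~(\ref{QP}) with every constraint free) collapses to $H_{\bar\theta}\Delta\hat\theta=-\Delta\bar\varepsilon_0\sum_{z_i^{(0)}\in Z^r}\nabla_\theta\ell_0(z_i^{(0)},\bar\theta)$. SOSC, which here is just $H_{\bar\theta}\succ 0$ (the same nonsingularity the classical IF needs anyway), then gives the unique solution $IF(\bar\theta;Z^r)\,\Delta\bar\varepsilon_0$.

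The paper gives no written proof of this proposition, but this reduction via Proposition~\ref{proposition7} and Theorem~\ref{thm:Auxiliary problem} is the route its framework sets up. Your implicit-function-theorem step is a sound addition: it shows the constrained solution map is locally the unconstrained one, so $d\theta/d\varepsilon_0$ is genuinely defined rather than identified by fiat.
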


\section{Validation}\label{sec:5}
\subsection{DIF Validating via Constrained Linear Regression}
Constrained linear regression is a widely applicable instance of learning with hard constraints. It arises in several domains: in portfolio optimization, asset weights must be non-negative and sum to one, i.e., $\theta_i \geq 0, \sum_i \theta_i=1$; in traffic flow modeling, flow variables must satisfy conservation laws and remain below capacity, e.g., $A \theta=0, \theta \leq c$; and in fair machine learning, additional linear conditions are imposed to enforce equity across groups, such as $A_{\text {fair }} \theta \leq b_{\text {fair }}$.

This section leverages a constrained linear regression to evaluate the DIF estimator. We repeatedly remove one data point (100 trials) and compare the solution change $\Delta \theta$ predicted by DIF with the ground-truth retraining solution. We include the classical IF estimator and the penalty-based IF approximation as baselines.

\textbf{Data generation.} We generate a synthetic regression dataset. Specifically, we draw $n=1000$ samples with $d=5$ features:
\begin{align*}
X \in \mathbb{R}^{n \times d}, X_{i j} \sim \mathcal{N}(0,1), \quad \theta^* \sim \mathcal{N}\left(0, I_d\right), \quad y=X \theta^*+\varepsilon, \quad \varepsilon \sim \mathcal{N}\left(0,0.1^2 I_n\right) .
\end{align*}

\textbf{Constrained Linear Regression.} The regression parameters are obtained by solving the constrained least-squares problem:
$$
\hat{\theta}=\arg \min _{\theta \in \mathbb{R}^5} \frac{1}{2 n}\|X \theta-y\|_2^2 \quad \text { s.t.} \quad A_{\text {eq }} \theta=b_{\text {eq }}, \quad A_{\text {ineq }} \theta \leq b_{\text {ineq }}
$$
where
$$
\begin{gathered}
A_{\mathrm{eq}}=[1,1,1,1,1], \quad b_{\mathrm{eq}}=-4.0 , \quad
A_{\mathrm{ineq}}=\left[\begin{array}{ccccc}
1 & 1 & 0 & 0 & 0 \\
0 & 0 & -1 & 0 & 1
\end{array}\right], \quad b_{\mathrm{ineq}}=\left[\begin{array}{c}
1.5 \\
1.5
\end{array}\right] .
\end{gathered}
$$
\textbf{Experimental results.} 
We perform 100 single-point removals. 
For each trial, we estimate the resulting change $\Delta\theta$ and compare it with the ground-truth leave-one-out (LOO) retraining. 
We leverage three estimators: i) The IF estimates the solution change by ignoring the constraints, yielding $\Delta \theta_{\text{IF}}$ as in equation~(\ref{IF estimator}). ii) The penalty-based IF adds soft penalties for the constraints to the objective and applies the IF estimator on the penalized surrogate. iii) The DIF enforces feasibility by solving the QP~(\ref{QP}), yielding $\Delta\theta_{\mathrm{DIF}}$ as in equation~(\ref{DIF estimator}).  Appendix~\ref{appendix:constrained linear regression} provides the full derivations for all three methods in the constrained linear regression setting. 
All experiments are solved with \texttt{CVXPY}.

As shown in Fig.~\ref{fig:a}, DIF nearly coincides with LOO (points align with $y{=}x$), whereas IF and penalty IF exhibit noticeable bias.

\begin{figure}[htbp]
  \centering
  \begin{subfigure}[b]{0.4\linewidth}
    \includegraphics[width=\linewidth]{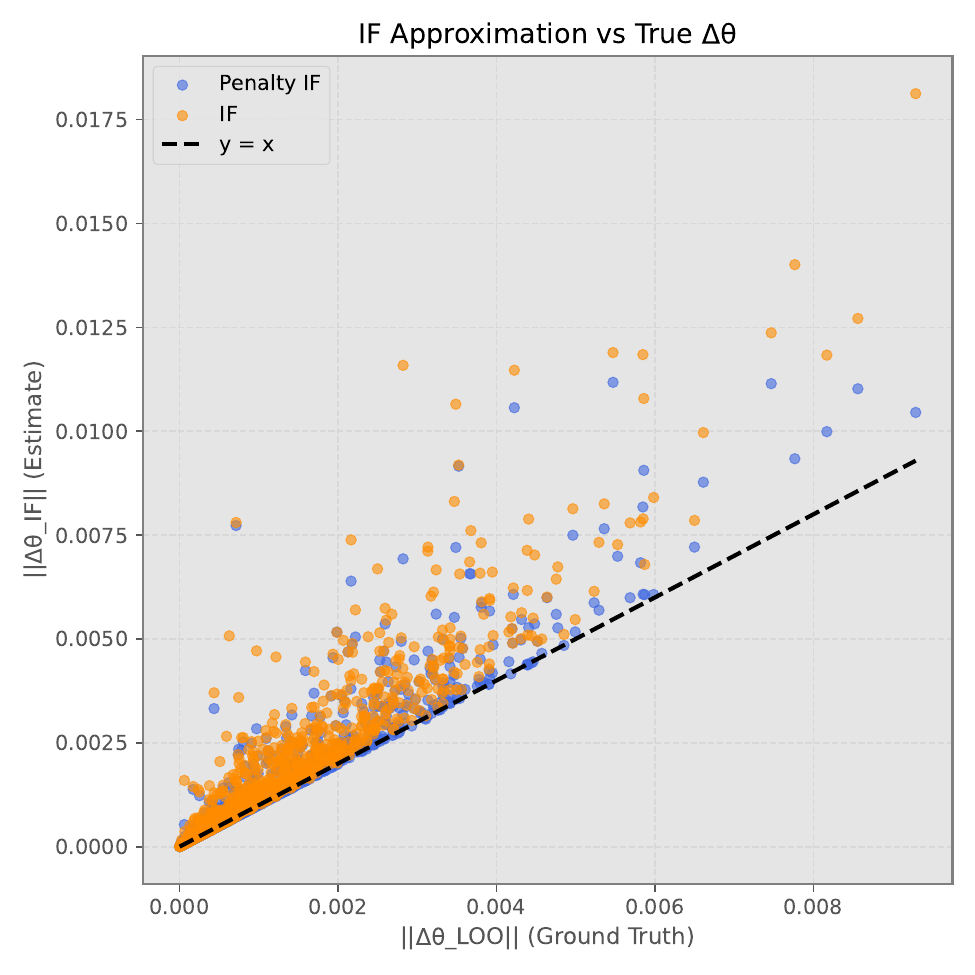}
    \caption{}
  \end{subfigure}
  \hfill
  \begin{subfigure}[b]{0.4\linewidth}
    \includegraphics[width=\linewidth]{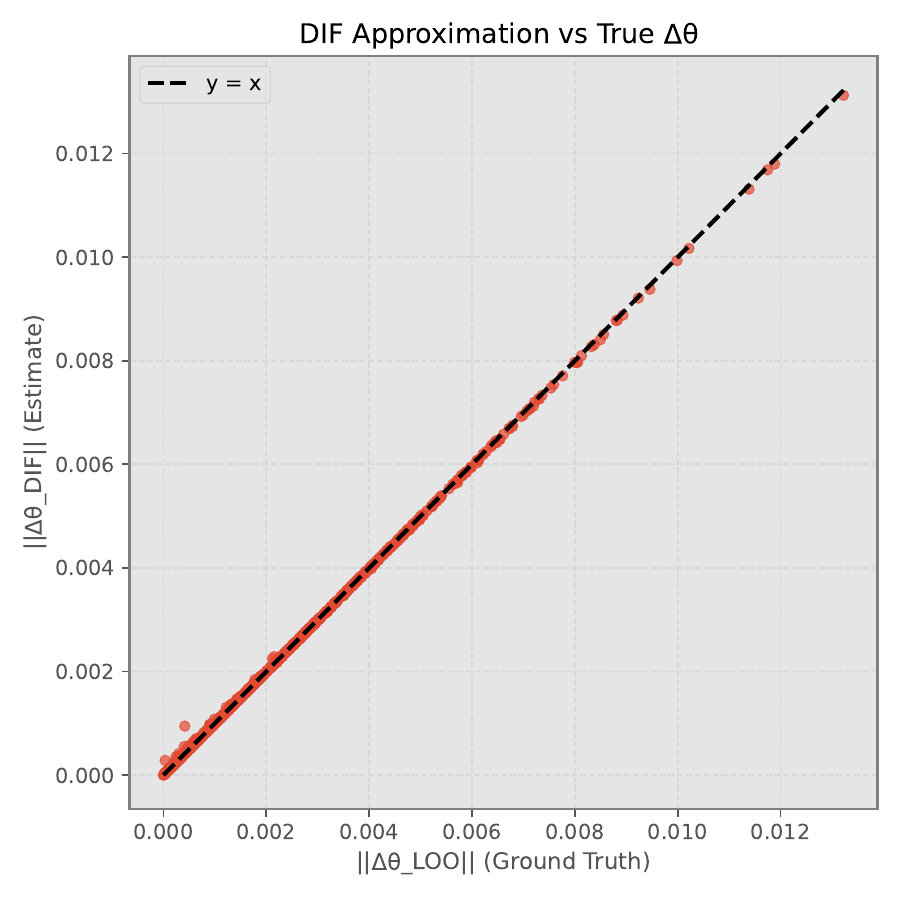}
    \caption{}
  \end{subfigure}
  \caption{Comparison of solution changes on a constrained linear regression task.
(a) IF and the penalty-based IF estimators versus the ground truth leave-one-out retraining results.
(b) Estimation of the proposed DIF versus the same ground-truth values. DIF aligns almost perfectly with the $y=x$ line, demonstrating its ability to accurately capture parameter changes under constraints.}
\label{fig:a}
\end{figure}

\subsection{DIF Validation via Constrained CNN}

\paragraph{Model} We adopt the following constrained learning formulation~\citep{shen2022agnostic}:
\begin{equation}
\label{eq:constrained-cnn}
\min_{\theta \in \Theta} \; \overline{R}(f_\theta) 
\quad \text{s.t.} \quad 
R_i(f_\theta) - \overline{R}(f_\theta) - \tau \leq 0, 
\;\; i = 1, \dots, m,
\end{equation}
where $R_i(f_\theta) = \mathbb{E}_{(x,y)\sim \mathcal{D}_i}[\ell(f_\theta(x),y)]$ is the risk of the client (or group) $i$, and $\overline{R}(f_\theta) = \frac{1}{C}\sum_{i=1}^C R_i(f_\theta)$ is the average risk. The constraints ensure that no group’s risk exceeds the global average risk by more than $\tau$, thereby limiting the performance disparity across groups.

We use a network with seven convolutional layers and $\tanh (\cdot)$ non-linearities, modeled after the all convolutional network of \citet{springenberg2014striving}. We train it on the MNIST training set~\citep{lecun1998mnist}. We solve this problem using a primal--dual method, jointly updating the model parameters $\theta$ and the Lagrange multipliers.

\paragraph{Heterogeneous Data Partitioning.} We create non-IID group-wise data partitions following~\citet{shen2022agnostic}
. We split the dataset into $C$ groups by allocating an $\alpha$-fraction of the data uniformly at random and distributing the remaining $(1-\alpha)$-fraction in a label-skewed way, where samples are sorted by class labels and assigned consecutively to groups. Unless otherwise specified, we set the number of groups to $m=3$ and $\alpha=0.5$. This setup creates label-imbalanced distributions across different groups.

\paragraph{Influence Estimation.}
We employ DIF to estimate the effect of removing training samples under the fairness constraint. 
We select the 100 most influential training samples and remove one sample at a time (100 trials in total). 
Since CNNs are typically non-convex, the solution may shift to another valley in the loss landscape during retraining. 
Therefore, instead of directly comparing the predicted solution change \(\Delta\theta_{\mathrm{DIF}}\) with the ground-truth change \(\Delta\theta_{\mathrm{LOO}}\), 
we evaluate DIF indirectly by comparing their resulting loss changes on a misclassified test point. 
Specifically, we approximate the updated model as \(\theta' \approx \theta+\Delta\theta_{\mathrm{DIF}}\) 
and compute the predicted loss difference
$
{\Delta\ell} 
= R\big(f_{\theta'}(x_{\text{test}}), y_{\text{test}}\big)
- R\big(f_{\theta}(x_{\text{test}}), y_{\text{test}}\big).
$
Figure~\ref{fig:DIF} compares these DIF-predicted loss differences with the actual loss differences obtained by retraining the model. 
The points align closely with the \(y=x\) line (Pearson \(r = 0.90\)), indicating that DIF accurately predicts the influence of individual training samples on this test loss.

\begin{figure}[h]
    \centering
    \includegraphics[width=0.4\linewidth]{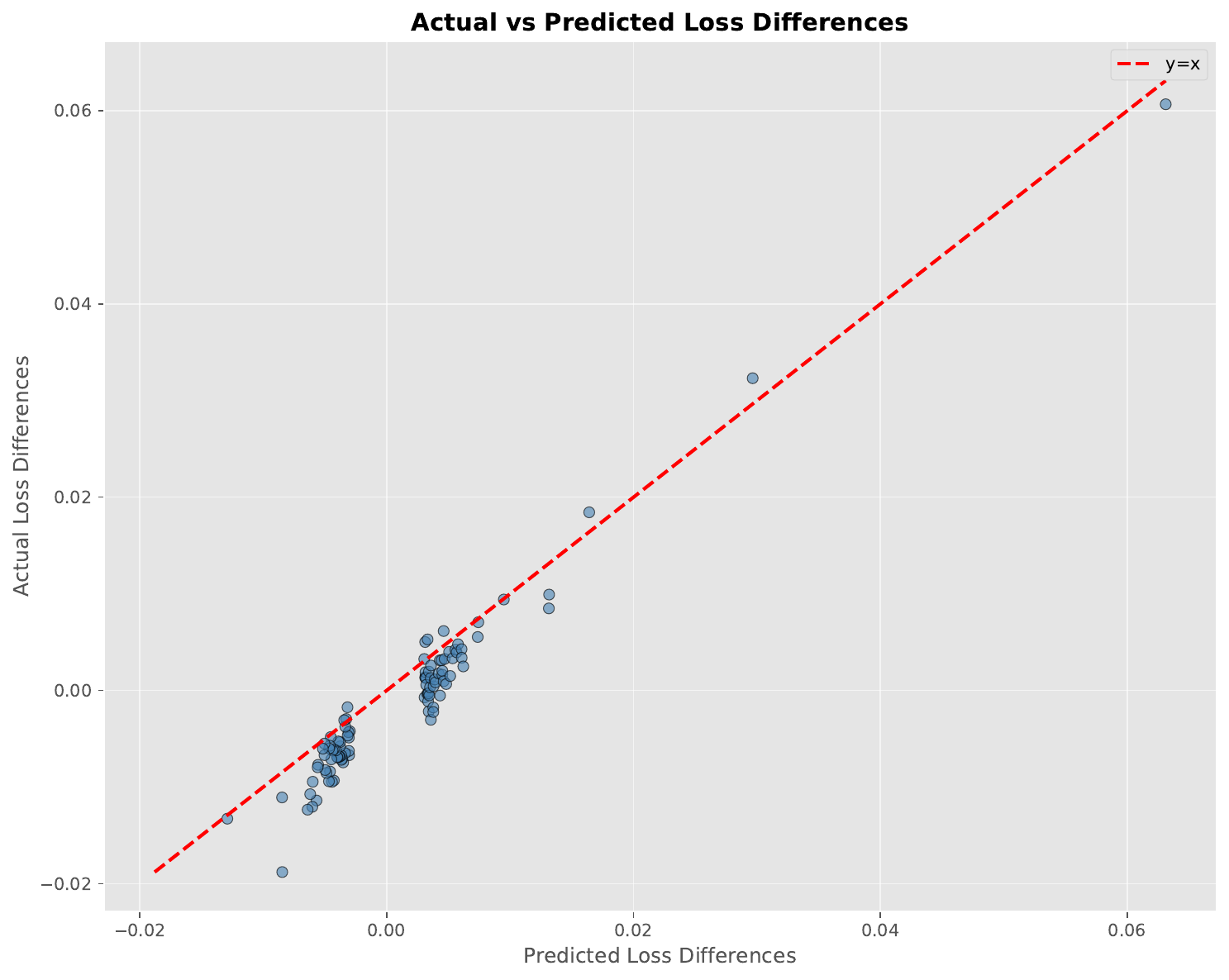}
    \caption{Actual vs. DIF predicted loss differences on a misclassified test sample.}
    \label{fig:DIF}
\end{figure}

\section{Conclusion}
\newpage

\bibliography{iclr2026_conference}
\bibliographystyle{iclr2026_conference}
\newpage
\appendix
\section{Preliminary}
\begin{definition}[Polar Cone; cf.~\cite{dontchev2009implicit}]
Let $K \subseteq \mathbb{R}^n$ be a closed convex cone.  
The \emph{polar cone} of $K$ is defined as
\begin{equation}
    K^* = \{\, y \in \mathbb{R}^n \;\mid\; \langle x, y \rangle \leq 0 \;\; \forall x \in K \,\}.
\end{equation}
\end{definition}

Moreover, the normal vectors to $K$ and $K^*$ satisfy
\begin{equation}\label{22}
    y \in N_K(x) 
    \;\;\Longleftrightarrow\;\; 
    x \in N_{K^*}(y) 
    \;\;\Longleftrightarrow\;\;
    x \in K,\; y \in K^*,\; \langle x, y \rangle = 0.
\end{equation}

\begin{definition}[Tangent Cone, cf.~\cite{rockafellar1998variational}]
For a set $C \subset \mathbb{R}^n$ (not necessarily convex) and a point $x \in C$, 
a vector $v$ is said to be tangent to $C$ at $x$ if
\begin{equation}
    \frac{1}{\tau^k}(x^k - x) \;\to\; v 
    \quad \text{for some } x^k \to x, \; x^k \in C, \; \tau^k \downarrow 0.
\end{equation}
The set of all such vectors $v$ is called the \emph{tangent cone} to $C$ at $x$ 
and is denoted by $T_C(x)$. 
For $x \notin C$, we take $T_C(x) = \emptyset$.
\end{definition}

\begin{definition}[Critical Cone, cf.~\cite{rockafellar1998variational}]
For a convex set $C$, any $x \in C$ and any $v \in N_C(x)$, 
the \emph{critical cone} to $C$ at $x$ for $v$ is
\begin{equation}
    K_C(x,v) = \{\, w \in T_C(x) \mid w \perp v \,\}.
\end{equation}
\end{definition}

\begin{definition}[Critical Subspaces]\label{def:critical subspaces}
For a convex set $C\subset\mathbb{R}^n$ and $(x,v)$ with $v\in N_C(x)$, 
let $K_C(x,v)$ denote the critical cone at $(x,v)$.  
Then the associated \emph{critical subspaces} are defined as
\begin{equation}
K_C^+(x,v) \;=\; K_C(x,v)-K_C(x,v)
\;=\; \{\,w-w' \mid w,w'\in K_C(x,v)\,\},
\end{equation}
\begin{equation}
K_C^-(x,v) \;=\; K_C(x,v)\cap[-K_C(x,v)]
\;=\; \{\,w\in K_C(x,v)\mid -w\in K_C(x,v)\,\}.
\end{equation}
Here $K_C^+(x,v)$ is the smallest linear subspace containing $K_C(x,v)$,
and $K_C^-(x,v)$ is the largest linear subspace contained in $K_C(x,v)$.
\end{definition}

\begin{lemma}[Reduction Lemma;cf.~2E.4~\cite{dontchev2009implicit}]\label{lem:reduction}
Let \(C \subset \mathbb{R}^n\) be a convex set and 
\(\bar{x}\in C\) with \(\bar{v}\in N_C(\bar{x})\). 
Define the critical cone \(K_C := K_C(\bar{x},\bar{v})\).  
Then, for all sufficiently small \(w,u \in \mathbb{R}^n\), we have the equivalence
\begin{equation}
\bar{v}+\Delta v \in N_C(\bar{x}+\Delta x)
\ \Longleftrightarrow\ 
\Delta v \in N_{K_C}(\Delta x).
\end{equation}
\end{lemma}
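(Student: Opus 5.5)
The plan is to prove the lemma under the assumption that $C$ is a \emph{polyhedral} convex set. That is the setting of 2E.4 in \citet{dontchev2009implicit}, and it covers every use in the paper, since the lemma is applied to $E=\mathbb{R}^d\times\mathbb{R}^m_+$. For a general convex set the equivalence can fail, because the tangent cone need not describe $C$ exactly near $\bar x$. The single tool I will use twice is the following standard fact.

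\emph{Local-cone fact.} For a polyhedral convex set $P$ and $p\in P$, the tangent cone $T_P(p)$ is polyhedral and closed, and $P\cap(p+\delta\mathbb{B})=(p+T_P(p))\cap(p+\delta\mathbb{B})$ for some $\delta>0$. This follows from writing $P=\{x:\langle a_i,x\rangle\le b_i\}$: constraints inactive at $p$ stay inactive near $p$. Consequently normal cones also localize: $N_P(p+u)=N_{T_P(p)}(u)$ for all small $u$.

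\textbf{Step 1 (reduce to a cone).} Set $T=T_C(\bar x)$. By the local-cone fact, for small $\Delta x$ we have $\bar v+\Delta v\in N_C(\bar x+\Delta x)$ if and only if $\bar v+\Delta v\in N_T(\Delta x)$. By \eqref{22} this holds if and only if
\[
\Delta x\in T,\quad \bar v+\Delta v\in T^*,\quad \langle \Delta x,\bar v+\Delta v\rangle=0 .
\]
Note also that $\bar v\in N_C(\bar x)=N_T(0)=T^*$, so $\langle \bar v,w\rangle\le 0$ for every $w\in T$. Hence
\[
K_C=T\cap \bar v^{\perp}=T\cap\{w:\langle \bar v,w\rangle\ge 0\}.
\]
Taking polars of this intersection of polyhedral cones gives $K_C^*=T^*+\mathbb{R}_+(-\bar v)$. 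Since $\bar v\in T^*$, this also equals $T^*+\mathbb{R}\bar v$. Again by \eqref{22}, $\Delta v\in N_{K_C}(\Delta x)$ if and only if
\[
\Delta x\in K_C,\quad \Delta v\in K_C^*,\quad \langle\Delta x,\Delta v\rangle=0 .
\]

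\textbf{Step 2 (identify $K_C^*$ locally with $T^*-\bar v$).} This is the key observation. $T^*$ is itself a polyhedral cone, and its tangent cone at $\bar v$ is $T_{T^*}(\bar v)=T^*+\mathbb{R}\bar v=K_C^*$. Applying the local-cone fact to $P=T^*$ at $p=\bar v$ yields, for all sufficiently small $\Delta v$,
\[
\bar v+\Delta v\in T^* \iff \Delta v\in K_C^* .
\]
This is the step I expect to be the main obstacle. The naive argument writes $\Delta v=t-s\bar v$ with $t\in T^*$, $s\ge 0$, and then needs $s\le 1$, but $s$ is not controlled by $\|\Delta v\|$. Passing through the tangent cone of $T^*$ avoids that estimate entirely and uses polyhedrality in an essential way.

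\textbf{Step 3 (match the complementarity conditions).}

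($\Rightarrow$) Suppose $\Delta x\in T$, $\bar v+\Delta v\in T^*$ and $\langle\Delta x,\bar v+\Delta v\rangle=0$. Step 2 gives $\Delta v\in K_C^*$. It remains to show $\langle \bar v,\Delta x\rangle=0$; then $\Delta x\in K_C$ and $\langle\Delta x,\Delta v\rangle=0$ follow. Argue by finiteness. $N_T(\Delta x)$ ranges over the finitely many closed cones $N_F$ attached to the faces $F$ of $T$. Those $N_F$ not containing $\bar v$ lie at positive distance from $\bar v$. So for $\Delta v$ small, $\bar v+\Delta v\in N_T(\Delta x)$ forces $\bar v\in N_T(\Delta x)$, i.e.\ $\langle\bar v,\Delta x\rangle=0$.

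($\Leftarrow$) Suppose $\Delta x\in K_C\subset T$, $\Delta v\in K_C^*$ and $\langle\Delta x,\Delta v\rangle=0$. Then $\langle \Delta x,\bar v\rangle=0$ because $\Delta x\in\bar v^\perp$, so $\langle\Delta x,\bar v+\Delta v\rangle=0$. Step 2 gives $\bar v+\Delta v\in T^*$. By Step 1 this is exactly $\bar v+\Delta v\in N_C(\bar x+\Delta x)$.

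Finally, take the neighborhood in the lemma to be the intersection of the three neighborhoods used: the one from Step 1, the one from Step 2, and the finite-face separation radius from the forward direction of Step 3.
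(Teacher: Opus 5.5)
Your proof is correct. The paper gives no argument of its own here: it states the lemma for an arbitrary convex $C$ and defers to 2E.4 of Dontchev--Rockafellar, which assumes $C$ polyhedral.

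Your restriction is therefore necessary, not just convenient. Take the disk $C=\{x\in\mathbb{R}^2: x_1^2+(x_2-1)^2\le 1\}$ with $\bar x=0$ and $\bar v=0$. Then $K_C=\{w: w_2\ge 0\}$. At each boundary point $\Delta x\neq 0$ of $C$ near $0$, the cone $N_C(\Delta x)$ is a nontrivial ray, while $N_{K_C}(\Delta x)=\{0\}$ because $\Delta x$ lies in the interior of $K_C$. As you say, the polyhedral case covers the paper's only application, which is to $E=\mathbb{R}^d\times\mathbb{R}^m_+$.

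Since the paper has no proof to match, the useful comparison is with the shortest form of the standard argument. Your Steps 1 and 2 already contain its two ingredients: localizing $C$ at $\bar x$, and localizing $T^*$ at $\bar v$ via $T_{T^*}(\bar v)=T^*+\mathbb{R}\bar v=K_C^*$. On the dual side, however, you use only the membership statement. That is why Step 3 needs a separate face-counting argument to recover $\langle\bar v,\Delta x\rangle=0$.

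You can avoid this by applying the normal-cone half of your local-cone fact to $P=T^*$ at $p=\bar v$. Combine it with the polarity relation $y\in N_Q(x)\iff x\in N_{Q^*}(y)$ for closed convex cones $Q$, used twice. For all small $(\Delta x,\Delta v)$ this gives
\[
\begin{aligned}
\bar v+\Delta v\in N_C(\bar x+\Delta x)
&\iff \bar v+\Delta v\in N_T(\Delta x)
\iff \Delta x\in N_{T^*}(\bar v+\Delta v)\\
&\iff \Delta x\in N_{K_C^*}(\Delta v)
\iff \Delta v\in N_{K_C}(\Delta x).
\end{aligned}
\]
Here the complementarity is automatic, because $\Delta x\in N_{K_C^*}(\Delta v)$ already forces $\Delta x\in K_C\subset\bar v^{\perp}$. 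No separation radius is needed.

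Your version is somewhat longer, but it checks the three conditions of the polarity characterization term by term and shows exactly where each smallness assumption is used. Both arguments are complete.
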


\section{Proofs}
\subsection{Auxiliary Lemmas}

For clarity, we first introduce some notation and rewrite the generalized equation in a standard form. 

\paragraph{Notation.}
Let 
\(\eta = (\theta,\lambda)\) denote the parameter vector, 
\(\bar{\eta} = (\bar{\theta},\bar{\lambda})\) be its reference value, 
and 
\(\Delta \hat\eta = (\Delta\hat{\theta},\Delta\hat{\lambda})\) be the estimation to $\Delta \eta=(\Delta \theta, \Delta \lambda)$.

\paragraph{Solution mapping.}
We define the solution mapping as
\begin{equation}\label{eta25}
S(\varepsilon)
:= \{\,(\theta,\lambda)\mid f(\varepsilon,\theta,\lambda)+N_E(\theta,\lambda)\ni 0 \,\}
   = \{\,\eta\mid f(\varepsilon,\eta)+N_E(\eta)\ni 0\,\},
\end{equation}
where \(N_E(\eta)\) denotes the normal cone to \(E\) at \(\eta\).

\paragraph{Generalized equation form.}
To study the local behavior of \(S\), we rewrite the system as the generalized equation
\begin{equation}
G(\eta) := f(\bar{\varepsilon},\bar{\eta})
        + \nabla_\eta f(\bar{\varepsilon},\bar{\eta})(\eta-\bar{\eta})
        + N_E(\eta),
\end{equation}
with
\begin{equation}
\nabla_\eta f(\bar{\varepsilon},\bar{\eta})=\nabla_{(\theta,\lambda)} f(\bar{\theta},\bar{\lambda},\bar{x}) =: A.
\end{equation}
At the reference point, the optimality condition~(\ref{new VI}) ensures that
\begin{equation}
G(\bar{\eta})\ni 0.
\end{equation}

\paragraph{Linearization.}
We then consider the linearized generalized equation (the first-order approximation of \(G\)):
\begin{equation}\label{g0}
G_0(\Delta\eta)
:= \nabla_\eta f(\bar{\varepsilon},\bar{\eta})\Delta\eta + N_K(\Delta\eta)
= A\Delta\eta + N_K(\Delta\eta),
\qquad G_0(0)\ni 0,
\end{equation}
where \(K:=K_E(\bar{\eta},-f(\bar{\varepsilon},\bar{\eta}))\) is the critical cone. This coincides with \( K= \mathbb{R}^d \times D\) as introduced in Proposition~\ref{pro:aux vi}.

It is convenient to define the inverse-type mapping
\begin{equation}
\bar{s} := G_0^{-1}=(A+N_K)^{-1},
\end{equation}
which solves the linearized inclusion~(\ref{g0}).

Given perturbation \(u\Delta\bar{\varepsilon}\), 
\begin{equation}\label{barssolution}
\bar{s}(-u\Delta\bar{\varepsilon})
= (A+N_K)^{-1}(-u\Delta\bar{\varepsilon})
= \{\Delta \hat\eta \mid A\Delta\hat\eta + N_K(\Delta\hat\eta) + u\Delta\bar{\varepsilon} \ni 0\}.
\end{equation}
In other words, \(\bar{s}(-u\Delta\bar{\varepsilon})\) yields the solution 
to the auxiliary VI~(\ref{aux vi}).

\paragraph{Critical subspace of $E$.}
Following the definition~\ref{def:critical subspaces}, the critical subspaces of $E$ at $(\bar\theta,\bar\lambda,-f(\bar\varepsilon,\bar\theta,\bar \lambda))$ are defined as
\begin{align}
K_E^+(\bar\theta,\bar\lambda,-f(\bar\varepsilon,\bar\theta,\bar \lambda))
&=
\mathbb{R}^d \times K_{\mathbb{R}^m}^+(\bar\lambda, \nabla_\lambda L(\bar\varepsilon,\bar\theta,\bar \lambda)),\\[2mm]
K_E^-(\bar\theta,\bar\lambda,-f(\bar\varepsilon,\bar\theta,\bar \lambda))
&=
\mathbb{R}^d \times K_{\mathbb{R}^m}^-(\bar\lambda, \nabla_\lambda L(\bar\varepsilon,\bar\theta,\bar \lambda)).
\end{align}
\begin{equation}\label{K-}
\Delta\hat\eta=(\Delta\hat\theta,\Delta\hat\lambda)\ \text{satisfies}\ 
\Delta\hat\eta \in K_E^+
\ \Longleftrightarrow\ 
\Delta\hat\lambda_j = 0,\ \forall j\in I_{\text{Inactive}},\ \text{and } 
\Delta\hat\theta\in\mathbb{R}^d.
\end{equation}

\begin{lemma}\label{lamma:assumption of tm5}
Under the assumptions of Theorem~\ref{thm:5},  
for any $\Delta\hat\eta$ satisfying the linearized VI~(\ref{aux vi}),  
if

\begin{equation}
\Delta\hat\eta\in K_E^+,\quad 
\Delta\hat\eta\neq 0,\quad 
A\Delta\hat\eta\perp K_E^-,
\end{equation}

then
\[
\langle \Delta\hat\eta,\; A\Delta\hat\eta\rangle > 0.
\]
\end{lemma}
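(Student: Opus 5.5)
The plan is to write out the block structure of $A$ and reduce the quadratic form to the Hessian of the Lagrangian. From the definition of $f$,
\[
A=\begin{pmatrix} H & B^\top \\ -B & 0\end{pmatrix},\qquad H:=\nabla^2_{\theta\theta}L(\bar\varepsilon,\bar\theta,\bar\lambda),
\]
where the $j$-th row of $B$ is $g_j^\top:=\frac{1}{N_j}\sum_i\nabla_\theta\ell_j(z_i^{(j)},\bar\theta)^\top$. The lower-right block vanishes because $L$ is affine in $\lambda$. The off-diagonal blocks are skew, so their contributions cancel in the quadratic form:
\[
\langle\Delta\hat\eta,A\Delta\hat\eta\rangle=\langle\Delta\hat\theta,H\Delta\hat\theta\rangle+\langle\Delta\hat\theta,B^\top\Delta\hat\lambda\rangle-\langle\Delta\hat\lambda,B\Delta\hat\theta\rangle=\langle\Delta\hat\theta,H\Delta\hat\theta\rangle .
\]
The claim therefore reduces to showing that $\Delta\hat\theta\neq0$ and that $\Delta\hat\theta$ is orthogonal to $g_j$ for every $j\in I_{\text{Binding}}$. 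Assumption 2 (SOSC) then gives strict positivity.

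\textbf{Step 1: identify $K_E^-$.} For $\mathbb{R}^m_+$ at $\bar\lambda$ with normal vector $\nabla_\lambda L$, the critical cone in coordinate $j$ is:
\begin{itemize}[label={}, leftmargin=*]
\item $\mathbb{R}$ when $\bar\lambda_j>0$ (the binding constraints);
\item $\mathbb{R}_+$ when $\bar\lambda_j=0$ and the constraint is active (the non-binding constraints, where $\nabla_{\lambda_j}L=0$);
\item $\{0\}$ when the constraint is inactive (where $\nabla_{\lambda_j}L<0$).
\end{itemize}
This is consistent with $K=\mathbb{R}^d\times D$. Hence $K_E^-=\mathbb{R}^d\times\{\Delta\lambda:\Delta\lambda_j=0 \text{ for } j\notin I_{\text{Binding}}\}$. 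The condition $A\Delta\hat\eta\perp K_E^-$ then says two things. First, the $\theta$-block of $A\Delta\hat\eta$ vanishes, that is, $H\Delta\hat\theta+B^\top\Delta\hat\lambda=0$. Second, the binding components of $-B\Delta\hat\theta$ vanish, that is, $\langle g_j,\Delta\hat\theta\rangle=0$ for all $j\in I_{\text{Binding}}$. This second fact is exactly the orthogonality SOSC requires.

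\textbf{Step 2: rule out $\Delta\hat\theta=0$.} This is the main obstacle. Suppose $\Delta\hat\theta=0$. Then $\Delta\hat\lambda\neq0$, since $\Delta\hat\eta\neq0$. Membership $\Delta\hat\eta\in K_E^+$, via (\ref{K-}), gives $\Delta\hat\lambda_j=0$ for all inactive $j$. Step 1 gives $B^\top\Delta\hat\lambda=0$, which therefore reads $\sum_{j\in I_{\text{Active}}}\Delta\hat\lambda_j g_j=0$ with not all coefficients zero. This contradicts Assumption 1 (LICQ). So $\Delta\hat\theta\neq0$.

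Combining Step 2 with the orthogonality from Step 1, SOSC yields $\langle\Delta\hat\theta,H\Delta\hat\theta\rangle>0$, and by the reduction above this equals $\langle\Delta\hat\eta,A\Delta\hat\eta\rangle$.

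The only delicate point is the careful computation of $K_E^{\pm}$ from the sign pattern of $\nabla_\lambda L$ and $\bar\lambda$. In particular, one must check that the non-binding directions belong to $K_E^+$ but not to $K_E^-$, so that orthogonality is imposed only on the binding gradients, matching SOSC. The hypothesis that $\Delta\hat\eta$ solves (\ref{aux vi}) is not actually needed beyond these structural conditions.
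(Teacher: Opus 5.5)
Your proof is correct and follows the paper's route. You write out the block structure of $A$, read off $H\Delta\hat\theta+\nabla_{\theta\lambda}L\,\Delta\hat\lambda=0$ and the orthogonality of $\Delta\hat\theta$ to the binding gradients from $A\Delta\hat\eta\perp K_E^-$, and then combine the skew cancellation $\langle\Delta\hat\eta,A\Delta\hat\eta\rangle=\langle\Delta\hat\theta,H\Delta\hat\theta\rangle$ with SOSC. Your Step 2 is a necessary step that the paper's proof skips: the paper applies SOSC without checking $\Delta\hat\theta\neq 0$ and never uses LICQ. Without that step the claim fails, since any $\Delta\hat\eta=(0,\Delta\hat\lambda)$ with $0\neq\Delta\hat\lambda$ supported on $I_{\text{Active}}$ and $\nabla_{\theta\lambda}L\,\Delta\hat\lambda=0$ lies in $K_E^+$, satisfies $A\Delta\hat\eta=0\perp K_E^-$, and gives $\langle\Delta\hat\eta,A\Delta\hat\eta\rangle=0$; so your version is the more complete argument.
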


\begin{proof}
For any $\Delta\hat\eta$ satisfying the linearized VI~(\ref{aux vi}), 
it follows from the definition of the space $D$ that 
$\Delta\hat\lambda_j=0$ for all $j$ in the inactive set.
Therefore, any $\Delta\hat\eta$ satisfying VI~(15) also satisfies 
$\Delta\hat\eta\in K_E^+$.

\medskip
We next analyze the orthogonality condition 
\(A\Delta\hat\eta \perp K_E^-\).
By expanding \(A(\Delta\hat\theta,\Delta\hat\lambda)\) and splitting the \(\theta\)- and \(\lambda\)-parts, we obtain:
\begin{align*}
A\Delta\hat{\eta}\perp K_E^- 
&\ \Longleftrightarrow\ 
A(\Delta\hat\theta,\Delta\hat\lambda)\perp K_E^- \\[2mm]
&\ \Longleftrightarrow\ 
\bigl(H\Delta\hat\theta+\nabla_{\theta\lambda}L(\bar{\varepsilon}, \bar{\theta}, \bar{\lambda})\,\Delta\hat\lambda,\;
-\nabla_{\lambda\theta}L(\bar{\varepsilon}, \bar{\theta}, \bar{\lambda})\,\Delta\hat\theta \bigr)\perp K_E^- \\[2mm]
&\ \Longleftrightarrow\ 
H\Delta\hat\theta+\nabla_{\theta\lambda}L(\bar{\varepsilon}, \bar{\theta}, \bar{\lambda})\,\Delta\hat\lambda \perp \mathbb{R}^d,\quad 
-\nabla_{\lambda\theta}L(\bar{\varepsilon}, \bar{\theta}, \bar{\lambda})\,\Delta\hat\theta \perp 
K_{\mathbb{R}^m}^{-}\!\left(\bar{\lambda}, \nabla_\lambda L(\bar{\varepsilon}, \bar{\theta}, \bar{\lambda})\right) \\[2mm]
&\ \Longleftrightarrow\ 
H\Delta\hat\theta+\nabla_{\theta\lambda}L(\bar{\varepsilon}, \bar{\theta}, \bar{\lambda})\,\Delta\hat\lambda = 0,\quad 
-\nabla_{\lambda\theta}L(\bar{\varepsilon}, \bar{\theta}, \bar{\lambda})\,\Delta\hat\theta \perp 
K_{\mathbb{R}^m}^{-}\!\left(\bar{\lambda}, \nabla_\lambda L(\bar{\varepsilon}, \bar{\theta}, \bar{\lambda})\right).
\end{align*}

\medskip
Moreover, by the definition of the critical subspace, we have
\begin{equation}
\Delta \hat\lambda \in 
K_{\mathbb{R}^m}^{-}\!\left(\bar{\lambda}, \nabla_\lambda L(\bar{\varepsilon}, \bar{\theta}, \bar{\lambda})\right)
\ \Longleftrightarrow\ 
\Delta \hat{\lambda}_j = 0,\ \forall j \in I_{\text{non-binding}} \cup I_{\text{Inactive}},
\end{equation}
where \(I_{\text{Inactive}}\) and \(I_{\text{non-binding}}\) denote the sets of inactive and non-binding constraints, respectively.

\medskip
Consequently, the condition 
\[
-\nabla_{\lambda \theta} L(\bar{\varepsilon}, \bar{\theta}, \bar{\lambda})\, \Delta \hat{\theta} 
\ \perp\ 
K_{\mathbb{R}^m}^{-}\!\left(\bar{\lambda}, \nabla_\lambda L\right)
\]
only requires that
\[
\nabla_{\lambda_j \theta} L\, \Delta \hat{\theta} = 0,
\qquad \forall j \in I_{\text{binding}}.
\]
That is,
\begin{align}\label{AK}
A\Delta\hat{\eta}\perp K_E^- 
&\ \Longleftrightarrow\ 
H\Delta\hat\theta+\nabla_{\theta\lambda}L(\bar{\varepsilon}, \bar{\theta}, \bar{\lambda})\,\Delta\hat\lambda = 0,\quad 
\nabla_{\lambda_j \theta} L\, \Delta \hat{\theta} = 0,\ \forall j \in I_{\text{binding}}.
\end{align}

\medskip
Notice that
\begin{equation}
    \nabla_{\lambda_j \theta} L \,\Delta \hat{\theta}=0 
    \ \Longleftrightarrow\ 
    \Delta \hat{\theta} \perp 
    \frac{1}{N_j} \sum_{i=1}^{N_j} \nabla_\theta \ell_j\left(z_i^{(j)}, \bar{\theta}\right),
    \qquad \forall j \in I_{\text{binding}}.
\end{equation}

\medskip
Assumption~2 in Theorem~\ref{thm:5} implies that 
for all $\Delta \hat{\theta}$ satisfying~(\ref{AK}), we have 
\begin{equation}\label{assump2}
\left\langle\Delta \hat{\theta}, \nabla_{\theta \theta}^2 L(\bar{\varepsilon}, \bar{\theta}, \bar{\lambda}) \Delta \hat{\theta}\right\rangle>0.
\end{equation}
Moreover, substituting 
\[
A:=\nabla_{(\theta, \lambda)} f(\bar{\varepsilon}, \bar{\theta}, \bar{\lambda})
=\begin{bmatrix}
\nabla_\theta^2 L(\bar{\varepsilon}, \bar{\theta}, \bar{\lambda}) & \nabla_{\theta \lambda} L(\bar{\varepsilon}, \bar{\theta}, \bar{\lambda}) \\
-\nabla_{\theta \lambda} L(\bar{\varepsilon}, \bar{\theta}, \bar{\lambda}) & -\nabla_\lambda^2 L(\bar{\varepsilon}, \bar{\theta}, \bar{\lambda})
\end{bmatrix},
\] to (\ref{assump2}),

we obtain
\begin{equation}
\langle \Delta\hat\eta,\; A\Delta\hat\eta\rangle 
=\langle (\Delta\hat\theta,\Delta\hat\lambda),\, A(\Delta\hat\theta,\Delta\hat\lambda) \rangle
=
\langle \Delta\hat\theta,\, \nabla^2_{\theta\theta} L(\bar\varepsilon,\bar\theta,\bar\lambda)\,\Delta\hat\theta \rangle.
\end{equation}
Combining the two results yields 
\(\langle\Delta \hat{\eta}, A \Delta \hat{\eta}\rangle>0\).

\end{proof}

\begin{lemma}\label{lemma15}
Under the assumptions of Theorem~\ref{thm:5}, 
$\bar{s} := (A+N_K)^{-1}$ is everywhere single-valued.  
Equivalently, there exists a unique solution 
$\Delta\hat\eta$ to the auxiliary VI~(\ref{aux vi}) 
given $(\bar\varepsilon,\bar\theta,\bar \lambda)$ and $\Delta \hat{\varepsilon}$.
\end{lemma}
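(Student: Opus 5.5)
We need to show that $(A+N_K)^{-1}$ is everywhere single-valued. Here $K=\mathbb{R}^d\times D$ is a polyhedral convex cone, and $A$ is the (non-symmetric) Jacobian.

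The plan is to use the standard characterization for linear variational inequalities over polyhedral cones (Dontchev--Rockafellar, Theorem 2E.6/2E.8 and the critical-face condition). Recall:
\[
(A+N_K)^{-1}\text{ is everywhere single-valued (and Lipschitz)}
\]
if and only if, for every pair of closed faces $F_1\supseteq F_2$ of the critical cone $K$, the following holds:
\[
w\in F_1-F_2,\quad Aw\in (F_1-F_2)^{*},\quad \langle w,Aw\rangle\le 0\ \Longrightarrow\ w=0 .
\]
A sufficient and simpler route is to prove two things directly: the solution set for each right-hand side is nonempty, and it is unique.

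\textbf{Uniqueness.} Suppose $\Delta\eta^1$ and $\Delta\eta^2$ both solve $A\Delta\eta+N_K(\Delta\eta)\ni -u$.

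First, monotonicity of $N_K$ gives
\[
\langle A(\Delta\eta^1-\Delta\eta^2),\,\Delta\eta^1-\Delta\eta^2\rangle\le 0 .
\]

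Second, set $w=\Delta\eta^1-\Delta\eta^2$ and write $w=(w_\theta,w_\lambda)$. The skew structure of $A$ in the $\theta\lambda$ block, together with $\nabla^2_\lambda L=0$ (since $L$ is affine in $\lambda$), gives
\[
\langle w,Aw\rangle=\langle w_\theta,Hw_\theta\rangle .
\]

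Third, $w\in K-K=K_E^+$, so $w_\lambda$ vanishes on the inactive indices. Moreover, on binding indices both solutions satisfy the equality row of the VI. On those indices $D$ is free, so the $\lambda$-component of the normal cone is zero there, which forces
\[
\nabla_{\lambda_j\theta}L\, w_\theta=0\quad\text{for } j\in I_{\text{Binding}} .
\]
Hence $w_\theta$ lies in the subspace where Assumption~2 (SOSC) applies. The nonpositivity above then forces $w_\theta=0$.

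Fourth, with $w_\theta=0$, the $\theta$-rows give $\nabla_{\theta\lambda}L\,w_\lambda=0$. This is the point where non-binding components need care: complementarity on the non-binding coordinates makes the $\theta$-row an equation in $\mathbb{R}^d$ regardless. The relation says $\sum_{j\in I_{\text{Active}}} w_{\lambda,j}\,\nabla_\theta g_j=0$, and LICQ (Assumption~1) then gives $w_\lambda=0$.

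This mirrors Lemma~\ref{lamma:assumption of tm5}, and I would invoke that lemma for the positivity part.

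\textbf{Existence.} This is the main obstacle, since $A$ is not positive definite on all of $K$. I would use the QP route from Theorem~\ref{thm:Auxiliary problem}: for any right-hand side $u$, consider the QP with linearized active constraints (equalities on binding indices, inequalities on non-binding ones).

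By LICQ the feasible set is nonempty; take a least-norm solution of the equality/inequality system using the linear independence of the active gradients. The objective is strongly convex on the relevant subspace $\{w_\theta\perp\nabla g_j,\ j\in I_{\text{Binding}}\}$. However, the non-binding inequality directions could leave that subspace, where SOSC gives no curvature, and this is exactly where existence can fail.

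To handle it, I would first try to show coercivity of the objective on the feasible polyhedron via Frank--Wolfe. If that fails, I would instead apply the Robinson/Dontchev--Rockafellar criterion above. In that criterion, strong regularity reduces to checking $\langle w,Aw\rangle>0$ for $w\in F_1-F_2$ with $Aw\perp(F_1-F_2)$. The face differences lie in $K_E^+$, so each case reduces to the Lemma~\ref{lamma:assumption of tm5} computation with the binding set enlarged to include some non-binding indices. This still needs positivity of $H$ on those subspaces, so I would check whether the paper's SOSC (stated only relative to binding constraints) suffices. I suspect it does, because its orthogonality constraint set is larger, which makes the positivity requirement stronger.

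Given single-valuedness, uniqueness transfers to the solution $\Delta\hat\eta$ of the auxiliary VI~(\ref{aux vi}) with $u=\mu\Delta\bar\varepsilon$.
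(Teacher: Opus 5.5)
Your uniqueness argument is correct, and it is the paper's argument. Monotonicity of $N_K$ gives $\langle w,Aw\rangle\le 0$. The skew block structure of $A$, together with $\nabla^2_\lambda L=0$, reduces this to $\langle w_\theta,Hw_\theta\rangle$. Finally, $w\in K^+$ and the binding rows put $w_\theta$ in the SOSC subspace. Your Step 4 is genuinely needed and is missing from the paper. The proof of Lemma~\ref{lamma:assumption of tm5} never uses Assumption~1, yet without LICQ a nonzero $w=(0,w_\lambda)$ with $\nabla_{\theta\lambda}L\,w_\lambda=0$ would give $\langle w,Aw\rangle=0$.

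The gap is existence. ``Everywhere single-valued'' requires a solution for every right-hand side, and your proposal ends with two unfinished routes and a suspicion. (The paper's written proof also stops after uniqueness, so here you are not behind it, but the statement still needs this step.) The obstacle you name is not real. Write $\nabla g_j:=\frac{1}{N_j}\sum_i\nabla_\theta\ell_j(z_i^{(j)},\bar\theta)$. For $r=(r_\theta,r_\lambda)$, the inclusion $A\Delta\eta+N_K(\Delta\eta)\ni r$ is the KKT system, with $\zeta_j=0$ on $I_{\text{Inactive}}$, of
\[
\min_w\ \tfrac12\langle w,Hw\rangle-\langle r_\theta,w\rangle\quad\text{s.t.}\quad \langle\nabla g_j,w\rangle=-r_{\lambda_j}\ (j\in I_{\text{Binding}}),\qquad \langle\nabla g_j,w\rangle\le -r_{\lambda_j}\ (j\in I_{\text{Non-binding}}).
\]
Because the binding constraints are equalities, any two feasible points differ by a vector in $M=\{w:\langle\nabla g_j,w\rangle=0,\ j\in I_{\text{Binding}}\}$. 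The non-binding inequalities only cut the feasible set down inside this translate of $M$; they cannot take it out of $M$.

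SOSC makes $H$ positive definite on $M$, so the objective is strongly convex on the feasible polyhedron. LICQ makes that polyhedron nonempty: just solve all active equations. Hence a unique minimizer exists, and multipliers exist because the constraints are linear. The KKT pair then solves the VI exactly as in the proof of Theorem~\ref{thm:Auxiliary problem}, so no Frank--Wolfe argument is needed.

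In the face-criterion route you also do not need to enlarge the binding set. Every face of $K$ contains the lineality space $K^-$, so $F_1-F_2\subseteq K^+$ and $(F_1-F_2)^*\subseteq (K^-)^\perp$. Any $w$ tested by the criterion therefore satisfies exactly the hypotheses of your Steps 2--4, and so $w=0$. Note that your paraphrase ``$Aw\perp(F_1-F_2)$'' is not the criterion; the condition uses the polar $(F_1-F_2)^*$. With the orthogonal complement the test is too weak: $K=\mathbb{R}_+$, $A=-1$ passes it, yet $(A+N_K)^{-1}$ is not single-valued.
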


\begin{proof}
We prove that $G_0^{-1} := (A+N_K)^{-1}$ is single-valued everywhere.

Assume, for contradiction, that there exist 
two distinct solutions $\Delta\hat\eta_1$ and $\Delta\hat\eta_2$
such that $r=G_0^{-1}(\Delta \hat \eta_1)=G_0^{-1}(\Delta \hat \eta_2)$. Then,
\[
\Delta\hat\eta_1, \Delta\hat\eta_2 \in K, 
\qquad 
r - A\Delta\hat\eta_1 \in N_K(\Delta\hat\eta_1),
\qquad
r - A\Delta\hat\eta_2 \in N_K(\Delta\hat\eta_2).
\]

This implies, by the definition of the normal cone, that
\begin{equation}
\langle \Delta\hat\eta_1,\, r - A\Delta\hat\eta_2 \rangle \le 0,
\qquad 
\langle \Delta\hat\eta_2,\, r - A\Delta\hat\eta_1 \rangle \le 0.
\end{equation}

Note that \(K\) is the critical cone.  
By condition~(\ref{22}), we have
\begin{equation}
\begin{aligned}
\Delta \hat{\eta}_1 &\in K, \quad 
r - A \Delta \hat{\eta}_1 \in K^*, \quad
\langle \Delta \hat{\eta}_1,\; r - A \Delta \hat{\eta}_1 \rangle = 0, \\[2mm]
\Delta \hat{\eta}_2 &\in K, \quad 
r - A \Delta \hat{\eta}_2 \in K^*, \quad
\langle \Delta \hat{\eta}_2,\; r - A \Delta \hat{\eta}_2 \rangle = 0.
\end{aligned}
\end{equation}

By the definition of the polar cone $K^* = (K^-)^\perp$, 
we also have
\[
-A(\Delta\hat\eta_1-\Delta\hat\eta_2) \in K^* - K^* = (K^-)^\perp,
\qquad 
\Delta\hat\eta_1 - \Delta\hat\eta_2 \in K - K = K^+.
\]

Consider the following inner product:
\begin{align*}
\langle \Delta\hat\eta_1 - \Delta\hat\eta_2,\; A(\Delta\hat\eta_1-\Delta\hat\eta_2) \rangle
&= \langle \Delta\hat\eta_1 - \Delta\hat\eta_2,\; [r - A\Delta\hat\eta_2] - [r - A\Delta\hat\eta_1] \rangle \\
&= \langle \Delta\hat\eta_1,\; r - A\Delta\hat\eta_2 \rangle 
   - \langle \Delta\hat\eta_1,\; r - A\Delta\hat\eta_1 \rangle \\
&\quad - \langle \Delta\hat\eta_2,\; r - A\Delta\hat\eta_2 \rangle
   + \langle \Delta\hat\eta_2,\; r - A\Delta\hat\eta_1 \rangle \\
&\le 0.
\end{align*}

Following Lemma~\ref{lamma:assumption of tm5}, Assumption~2 of Theorem~\ref{thm:5} ensures that 
for any nonzero $\Delta\hat\eta\in K^+$ with 
$A\Delta\hat\eta\perp K^-$, we must have 
$\langle \Delta\hat\eta,\; A\Delta\hat\eta\rangle > 0$. 
This contradicts the above inequality unless 
$\Delta\hat\eta_1=\Delta\hat\eta_2$.

Therefore, the solution must be unique, which proves that 
$G_0^{-1}=(A+N_K)^{-1}$ is everywhere single-valued.
\end{proof}

\begin{remark}\label{remark:single_lip}
According to Dontchev and Rockafellar~\cite[Chapter~2E]{dontchev2009implicit}, 
consider the generalized equation 
\(
G_0 = A + N_K
\),
where \(A\) is a linear operator and \(N_K\) is the normal cone mapping of a closed convex cone \(K\).
If the inverse mapping \(G_0^{-1}\) is \emph{everywhere single-valued},
then \(G_0^{-1}\) is (globally) Lipschitz continuous.
\end{remark}

\begin{lemma}[Relation between $G^{-1}$ and $G_0^{-1}$]\label{lem:G-vs-G0}

Then for all $\Delta\eta$ sufficiently close to $0$, 
the inverse mappings $G^{-1}$ and $G_0^{-1}$ satisfy the following relationship:
\begin{equation}
G^{-1}(\gamma)=G_0^{-1}(\gamma)+\bar{\eta},\qquad \gamma\ \text{near } 0.
\end{equation}
\end{lemma}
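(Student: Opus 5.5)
The plan is to compare the two generalized equations directly via the Reduction Lemma (Lemma~\ref{lem:reduction}). Take $G(\eta)=f(\bar\varepsilon,\bar\eta)+A(\eta-\bar\eta)+N_E(\eta)$ and $G_0(\Delta\eta)=A\Delta\eta+N_K(\Delta\eta)$ with $K=K_E(\bar\eta,-f(\bar\varepsilon,\bar\eta))$. The claim $G^{-1}(\gamma)=G_0^{-1}(\gamma)+\bar\eta$ for $\gamma$ near $0$ then says: for small $\Delta\eta$ and small $\gamma$,
\[
\gamma\in G(\bar\eta+\Delta\eta)\iff \gamma\in G_0(\Delta\eta),
\]
together with the statement that the relevant solutions $\Delta\eta$ are indeed small. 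Since $G^{-1}$ is in general multivalued, the equality is understood graphically: it holds with both sides intersected with a neighborhood of $\bar\eta$ (respectively of $0$).

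\textbf{Step 1 (pointwise equivalence).} Set $\bar v:=-f(\bar\varepsilon,\bar\eta)$. By the optimality condition~(\ref{new VI}), $\bar v\in N_E(\bar\eta)$. The inclusion $\gamma\in G(\bar\eta+\Delta\eta)$ rewrites as
\[
\bar v+(\gamma-A\Delta\eta)\in N_E(\bar\eta+\Delta\eta).
\]
Put $\Delta v:=\gamma-A\Delta\eta$. When $\Delta\eta$ and $\gamma$ are small, $\Delta v$ is small because $A$ is bounded. Because $E=\mathbb{R}^d\times\mathbb{R}^m_+$ is polyhedral, Lemma~\ref{lem:reduction} applies on a fixed neighborhood and gives
\[
\bar v+\Delta v\in N_E(\bar\eta+\Delta\eta)\iff \Delta v\in N_K(\Delta\eta),
\]
that is, $\gamma-A\Delta\eta\in N_K(\Delta\eta)$, which is exactly $\gamma\in G_0(\Delta\eta)$. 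This proves the graphical identity locally.

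\textbf{Step 2 (localization).} The graphical identity alone is not enough; we must also know that the solution of $G_0$ stays inside the neighborhood where Step 1 is valid. Here I invoke Lemma~\ref{lemma15} and Remark~\ref{remark:single_lip}: under the assumptions of Theorem~\ref{thm:5}, $G_0^{-1}$ is single-valued and globally Lipschitz with some constant $\kappa$. Since $G_0^{-1}(0)=0$ (because $K$ is a cone), we get $\|G_0^{-1}(\gamma)\|\le\kappa\|\gamma\|$. So for $\|\gamma\|$ small, $G_0^{-1}(\gamma)$ lies in the ball where the reduction is valid, and Step 1 then shows $\bar\eta+G_0^{-1}(\gamma)\in G^{-1}(\gamma)$. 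Conversely, any $\eta\in G^{-1}(\gamma)$ near $\bar\eta$ satisfies $\eta-\bar\eta\in G_0^{-1}(\gamma)$, so it equals $\bar\eta+G_0^{-1}(\gamma)$ by uniqueness.

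\textbf{Main obstacle.} The difficulty is making the neighborhood sizes consistent. Lemma~\ref{lem:reduction} requires both $\Delta\eta$ and $\Delta v$ small, and the smallness of $\Delta v=\gamma-A\Delta\eta$ depends on that of $\Delta\eta$. I would fix the reduction radius $\delta$ first, choose $\|\Delta\eta\|<\delta/(1+\|A\|)$, and then choose $\|\gamma\|<\delta/\bigl(2\kappa(1+\|A\|)\bigr)$. For polyhedral $E$ the reduction can be verified explicitly coordinatewise: on the $\theta$-block the normal cone is $\{0\}$. On the $\lambda$-block, for $j\in I_{\text{Inactive}}$ we have $\bar v_j<0$ and $\bar\lambda_j=0$, so small perturbations keep $\lambda_j=0$. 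For $j\in I_{\text{Binding}}$ we have $\bar\lambda_j>0$, so the component stays interior and the normal cone is zero. For $j\in I_{\text{Non-binding}}$ both $\bar v_j$ and $\bar\lambda_j$ vanish, and the condition reduces to the complementarity $\Delta\lambda_j\ge0$, $\Delta v_j\le0$, $\Delta\lambda_j\Delta v_j=0$. These three cases reproduce precisely the set $D$ of Proposition~\ref{pro:aux vi}, which confirms that $K=\mathbb{R}^d\times D$.
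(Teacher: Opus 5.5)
Your Step~1 is exactly the paper's argument: rewrite $\gamma\in G(\bar\eta+\Delta\eta)$ as $-f(\bar\varepsilon,\bar\eta)+(\gamma-A\Delta\eta)\in N_E(\bar\eta+\Delta\eta)$, use $-f(\bar\varepsilon,\bar\eta)\in N_E(\bar\eta)$, and apply the Reduction Lemma~\ref{lem:reduction}. You are also right that this needs $E$ to be polyhedral, which the paper's version of that lemma does not state. Your Step~2 goes beyond the paper, and it is correct and genuinely needed: it uses Lemma~\ref{lemma15} and Remark~\ref{remark:single_lip}, hence the hypotheses of Theorem~\ref{thm:5}, which is where the lemma is applied, to keep $G_0^{-1}(\gamma)$ inside the reduction neighborhood and to read the identity as a localization of $G^{-1}$ near $\bar\eta$. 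The paper skips this and passes straight from local graph equivalence to an identity of inverse maps, even though $G^{-1}(\gamma)$ can contain points far from $\bar\eta$. One small fix in your radius bookkeeping: take $\kappa\ge 1$, which you may always do by enlarging the Lipschitz constant, so that $\|\gamma-A\Delta\eta\|<\delta$ actually holds.
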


\begin{proof}
Consider any $\Delta\eta$ close to $0$.  
We first expand the generalized equation $G(\bar{\eta}+\Delta\eta)$ as
\begin{align*}
G(\bar{\eta}+\Delta\eta)
&= f(\bar{\varepsilon},\bar{\eta}) 
+ \nabla_{\eta} f(\bar{\varepsilon},\bar{\eta})\,\Delta\eta
+ N_E(\bar{\eta}+\Delta\eta)\\[1mm]
&= f(\bar{\varepsilon},\bar{\eta})
-\nabla_{\eta} f(\bar{\varepsilon},\bar{\eta})\,\bar{\eta}
+ \nabla_{\eta} f(\bar{\varepsilon},\bar{\eta})(\bar{\eta}+\Delta\eta)
+ N_E(\bar{\eta}+\Delta\eta).
\end{align*}

Let $A := \nabla_{\eta} f(\bar{\varepsilon},\bar{\eta})$.  
Then $\gamma \in G(\bar{\eta}+\Delta\eta)$ if and only if
\begin{equation}
\gamma \in f(\bar{\varepsilon},\bar{\eta})
- A\bar{\eta}
+ A(\bar{\eta}+\Delta\eta)
+ N_E(\bar{\eta}+\Delta\eta).
\end{equation}

Rearranging the terms, this is equivalent to
\begin{equation}
\gamma - f(\bar{\varepsilon},\bar{\eta}) + A\bar{\eta}
\in A(\bar{\eta}+\Delta\eta) + N_E(\bar{\eta}+\Delta\eta),
\end{equation}
which can be further rewritten as
\begin{equation}
\gamma - f(\bar{\varepsilon},\bar{\eta})
\in A\Delta\eta + N_E(\bar{\eta}+\Delta\eta).
\end{equation}

Note that $- f(\bar{\varepsilon},\bar{\eta}) \in N_E(\bar \eta)$. By the \emph{Reduction Lemma}~\ref{lem:reduction}, this holds if and only if
\begin{equation}
\gamma \in A\Delta\eta + N_K(\Delta\eta),
\end{equation}
which is exactly
\begin{equation}
\gamma \in G_0(\Delta\eta).
\end{equation}
\end{proof}

\begin{lemma}\label{G is first order approxomation}
Define 
\[
\sigma(\varepsilon):=G^{-1}\!\big(-\nabla_\varepsilon f(\bar\varepsilon,\bar\eta)\,(\varepsilon-\bar\varepsilon)\big).
\]
If $G^{-1}$ admits a single-valued Lipschitz localization around $0$ for $\bar\eta$,
then $\sigma(\varepsilon)$ is a first-order approximation of $S(\varepsilon)$ at $\bar\varepsilon$.
\end{lemma}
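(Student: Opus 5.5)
The plan is to derive Lemma~\ref{G is first order approxomation} from the standard implicit-function machinery for generalized equations (Robinson's theorem as presented in \citet{dontchev2009implicit}, Ch.~2B/2E). Write the solution mapping as $S(\varepsilon)=\{\eta \mid f(\varepsilon,\eta)+N_E(\eta)\ni 0\}$. Rewrite the inclusion defining $S(\varepsilon)$ in the form
\[
-\bigl[f(\varepsilon,\eta)-f(\bar\varepsilon,\bar\eta)-A(\eta-\bar\eta)\bigr]\in G(\eta),
\]
where $G$ is the partial linearization in $\eta$ at $(\bar\varepsilon,\bar\eta)$ introduced above. Set $r(\varepsilon,\eta):=f(\varepsilon,\eta)-f(\bar\varepsilon,\bar\eta)-A(\eta-\bar\eta)$. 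Then $\eta\in S(\varepsilon)$ near $\bar\eta$ if and only if $\eta=G^{-1}(-r(\varepsilon,\eta))$, where $G^{-1}$ means the single-valued Lipschitz localization (constant $\kappa$) assumed in the hypothesis. The candidate approximation $\sigma(\varepsilon)$ corresponds to replacing $r(\varepsilon,\eta)$ by its $\varepsilon$-linear part $\nabla_\varepsilon f(\bar\varepsilon,\bar\eta)(\varepsilon-\bar\varepsilon)$.

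\textbf{Step 1: uniform smallness of the linearization error.} Because $f$ is $C^1$ (indeed $C^2$ by the standing assumptions), the map $e(\varepsilon,\eta):=r(\varepsilon,\eta)-\nabla_\varepsilon f(\bar\varepsilon,\bar\eta)(\varepsilon-\bar\varepsilon)$ satisfies, for every $\delta>0$, the bound
\[
\|e(\varepsilon,\eta)-e(\varepsilon,\eta')\|\le\delta\|\eta-\eta'\|
\]
and also $\|e(\varepsilon,\bar\eta)\|=o(\|\varepsilon-\bar\varepsilon\|)$, provided $(\varepsilon,\eta,\eta')$ lies in a small enough neighborhood. This follows from the mean value theorem applied to $\nabla_\eta f-A$ and to $\nabla_\varepsilon f$.

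\textbf{Step 2: existence and single-valuedness of a localization of $S$.} Choose $\delta<1/\kappa$. For fixed $\varepsilon$, the map $\eta\mapsto G^{-1}(-\nabla_\varepsilon f(\bar\varepsilon,\bar\eta)(\varepsilon-\bar\varepsilon)-e(\varepsilon,\eta))$ is a contraction with constant $\kappa\delta<1$ on a small ball around $\bar\eta$. It maps that ball into itself for $\varepsilon$ close to $\bar\varepsilon$, by continuity of $\sigma$ and $\sigma(\bar\varepsilon)=\bar\eta$. Banach's fixed point theorem then yields a unique fixed point $s(\varepsilon)$, which is the localization of $S$ around $(\bar\varepsilon,\bar\eta)$.

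\textbf{Step 3: the first-order estimate.} Subtracting $\sigma(\varepsilon)=G^{-1}(-\nabla_\varepsilon f(\bar\varepsilon,\bar\eta)(\varepsilon-\bar\varepsilon))$ from the fixed point equation and using Lipschitz continuity gives
\[
\|s(\varepsilon)-\sigma(\varepsilon)\|\le\kappa\|e(\varepsilon,s(\varepsilon))\|\le\kappa\bigl(\delta\|s(\varepsilon)-\bar\eta\|+\|e(\varepsilon,\bar\eta)\|\bigr).
\]
Since $\|\sigma(\varepsilon)-\bar\eta\|\le\kappa\|\nabla_\varepsilon f\|\,\|\varepsilon-\bar\varepsilon\|$, we have $\|s(\varepsilon)-\bar\eta\|=O(\|\varepsilon-\bar\varepsilon\|)$. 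Here $\delta$ can be taken arbitrarily small as the neighborhood shrinks. It follows that $\|s(\varepsilon)-\sigma(\varepsilon)\|=o(\|\varepsilon-\bar\varepsilon\|)$, which is exactly the definition of a first-order approximation at $\bar\varepsilon$.

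\textbf{Main obstacle.} I expect the delicate part to be the bookkeeping of neighborhoods in Step 2: the arguments $-\nabla_\varepsilon f(\bar\varepsilon,\bar\eta)(\varepsilon-\bar\varepsilon)-e(\varepsilon,\eta)$ must stay inside the domain where the localization of $G^{-1}$ is single-valued and Lipschitz. At the same time, the fixed point must be shown to coincide with the relevant local piece of the possibly multivalued $S$. I would handle this exactly as in the proof of Robinson's theorem (Theorem~2B.1 in \citet{dontchev2009implicit}): first fix radii $a,b$ for the localization of $G^{-1}$, then shrink the $\varepsilon$-neighborhood so that all arguments have norm at most $b$ and the ball of radius $a$ is invariant under the contraction.
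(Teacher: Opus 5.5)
Your proof is correct, and your Step~3 is essentially the paper's whole argument. The paper also writes a solution as $\eta=G^{-1}\bigl(-\nabla_\varepsilon f(\bar\varepsilon,\bar\eta)\Delta\varepsilon-o(\Delta\varepsilon,\eta-\bar\eta)\bigr)$. It then uses the Lipschitz constant of $G^{-1}$ and $G^{-1}(0)=\bar\eta$ to get $\|\eta-\bar\eta\|=O(\|\Delta\varepsilon\|)$, and bounds the gap to $\sigma$ by $\kappa\|o(\Delta\varepsilon,\eta-\bar\eta)\|=o(\|\Delta\varepsilon\|)$.

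The genuine difference is your Step~2. The paper's proof is only an a priori estimate. It takes for granted three things:
\begin{enumerate}
\item that for small $\Delta\varepsilon$ some $\eta\in S(\bar\varepsilon+\Delta\varepsilon)$ exists;
\item that this $\eta$ lies in the neighborhood where the localization of $G^{-1}$ is valid, which is needed both to write $\eta=G^{-1}(\cdot)$ and to use $\|o\|\le\delta(\|\Delta\varepsilon\|+\|\eta-\bar\eta\|)$;
\item that $S$ is single-valued there, since it writes $S(\bar\varepsilon+\Delta\varepsilon)-\sigma(\bar\varepsilon+\Delta\varepsilon)$ and reuses this in the proof of Theorem~\ref{thm:5}.
\end{enumerate}
Your Banach fixed-point step follows the route of Robinson's theorem in Dontchev--Rockafellar. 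It actually constructs a single-valued Lipschitz localization $s$ of $S$ and identifies it with the local piece of $S$, so the conclusion is meaningful as stated. The price is that you need the strict-differentiability form of the remainder: Lipschitz in $\eta$ with arbitrarily small constant, which the standing $C^2$ assumption supplies. The paper's estimate needs only the Fr\'echet remainder $o(\|\Delta\varepsilon\|+\|\Delta\eta\|)$.

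One bookkeeping remark on Step~2: invariance of the ball of radius $a$ does not follow from continuity of $\sigma$ alone. You need $\kappa\delta a+\kappa\|e(\varepsilon,\bar\eta)\|+\|\sigma(\varepsilon)-\bar\eta\|\le a$. This holds because $\kappa\delta<1$ and the last two terms vanish as $\varepsilon\to\bar\varepsilon$.
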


\begin{proof}
Since $f$ is strictly differentiable at $(\bar\varepsilon,\bar\eta)$, for any small perturbations $(\Delta\varepsilon,\Delta\eta)$ we have
\begin{equation}
f(\bar\varepsilon+\Delta\varepsilon,\bar\eta+\Delta\eta)
=
f(\bar\varepsilon,\bar\eta)
+\nabla_\varepsilon f(\bar\varepsilon,\bar\eta)\,\Delta\varepsilon
+\nabla_\eta f(\bar\varepsilon,\bar\eta)\,\Delta\eta
+o(\Delta\varepsilon,\Delta\eta),
\end{equation}
where $\|o(\Delta\varepsilon,\Delta\eta)\|$ denotes the higher-order remainder term 
satisfying 
$\|o(\Delta\varepsilon,\Delta\eta)\|=o(\|\Delta\varepsilon\|+\|\Delta\eta\|)$.

By (\ref{eta25}),
\begin{equation}
\begin{aligned}
S(\bar\varepsilon+\Delta\varepsilon)
&=\{\eta\mid f(\bar\varepsilon+\Delta\varepsilon,\eta)+N_E(\eta)\ni 0\} \\
&=\{\eta\mid  f(\bar{\varepsilon}, \bar{\eta})+\nabla_{\varepsilon} f(\bar{\varepsilon}, \bar{\eta}) \Delta \varepsilon+\nabla_\eta f(\bar{\varepsilon}, \bar{\eta})(\eta-\bar{\eta})+o(\Delta \varepsilon, \eta-\bar{\eta})+N_E(\eta)\ni 0\}.\\
\end{aligned}
\end{equation}

Then, \(\eta \in S(\bar{\varepsilon}+\Delta\varepsilon)\) satisfieså that
\begin{equation}
    f(\bar{\varepsilon}, \bar{\eta})
    +\nabla_{\varepsilon} f(\bar{\varepsilon}, \bar{\eta}) \,\Delta \varepsilon
    +\nabla_\eta f(\bar{\varepsilon}, \bar{\eta})(\eta-\bar{\eta})
    +o(\Delta \varepsilon, \eta-\bar{\eta})
    +N_E(\eta)
    \;\ni\; 0.
\end{equation}

Rearranging the terms, we obtain the equivalent inclusion
\begin{equation}
    -\nabla_{\varepsilon} f(\bar{\varepsilon}, \bar{\eta}) \,\Delta \varepsilon
    -o(\Delta \varepsilon, \eta-\bar{\eta})
    \;\in\;
    f(\bar{\varepsilon}, \bar{\eta})
    +\nabla_\eta f(\bar{\varepsilon}, \bar{\eta})(\eta-\bar{\eta})
    +N_E(\eta)
    \;=\; G(\eta),
\end{equation}
where \(G(\eta):=\nabla_\eta f(\bar{\varepsilon},\bar{\eta})\,(\eta-\bar{\eta})+f(\bar{\varepsilon},\bar{\eta})+N_E(\eta)\) 
denotes the linearized generalized equation in \(\eta\).

Therefore, any solution \(\eta\in S(\bar\varepsilon+\Delta\varepsilon)\) can be written as:
\begin{equation}
    \eta 
    \;=\; 
    G^{-1}\!\left(
    -\nabla_{\varepsilon} f(\bar{\varepsilon}, \bar{\eta}) \,\Delta \varepsilon
    -o(\Delta \varepsilon, \eta-\bar{\eta})
    \right),
\end{equation}
which expresses \(S(\bar\varepsilon+\Delta\varepsilon)\) implicitly via the inverse mapping \(G^{-1}\).

We use $\kappa$ to denote the Lipschitz constant of $G^{-1}$. Since $\bar\eta=G(0)$, we have:

we have
\begin{equation}\label{eq:lipschitz-eta2}
\begin{aligned}
\|\eta-\bar\eta\|&=\| G^{-1}\!\left(
    -\nabla_{\varepsilon} f(\bar{\varepsilon}, \bar{\eta}) \,\Delta \varepsilon
    -o(\Delta \varepsilon, \eta-\bar{\eta})
    \right)-G^{-1}(0)\|\\
&\le k\,\big\|-\nabla_{\varepsilon} f(\bar\varepsilon,\bar\eta)\,\Delta\varepsilon
- o(\Delta\varepsilon,\eta-\bar\eta)\big\|.
\end{aligned}
\end{equation}
Applying the triangle inequality gives
\begin{equation}
\|\eta-\bar\eta\|
\le k\,\|\nabla_{\varepsilon} f(\bar\varepsilon,\bar\eta)\|\,\|\Delta\varepsilon\|
+ k\,\|o(\Delta\varepsilon,\eta-\bar\eta)\|.
\end{equation}

Since $o(\Delta\varepsilon,\eta-\bar\eta)=o(\|\Delta\varepsilon\|+\|\eta-\bar\eta\|)$ as $(\Delta\varepsilon,\eta-\bar\eta)\to (0,0)$, 
for any $\delta>0$ there exists a neighborhood of $(0,0)$ such that 
\begin{equation}\label{eq:small-o}
\|o(\Delta\varepsilon,\eta-\bar\eta)\|
\le \delta\big(\|\Delta\varepsilon\|+\|\eta-\bar\eta\|\big).
\end{equation}

Combining (\ref{eq:lipschitz-eta2}) and (\ref{eq:small-o}) yields
\begin{equation}
\|\eta-\bar\eta\|
\le k\,\|\nabla_{\varepsilon} f(\bar\varepsilon,\bar\eta)\|\,\|\Delta\varepsilon\|
+ k\delta\,\|\Delta\varepsilon\|
+ k\delta\,\|\eta-\bar\eta\|.
\end{equation}
Rearranging terms gives
\begin{equation}\label{DIFerror1}
(1-k\delta)\,\|\eta-\bar\eta\|
\le k\big(\|\nabla_{\varepsilon} f(\bar\varepsilon,\bar\eta)\|+\delta\big)\,\|\Delta\varepsilon\|.
\end{equation}

Finally, choosing $\delta>0$ small enough such that $k\delta<\frac12$, we obtain

\begin{equation}\label{bigo}
\|\eta-\bar\eta\|
\le \frac{k}{\,1-k\delta\,}\big(\|\nabla_{\varepsilon} f(\bar\varepsilon,\bar\eta)\|+\delta\big)\,
\|\Delta\varepsilon\|
= O(\|\Delta\varepsilon\|),
\end{equation}

Since $G^{-1}$ is Lipschitz continuous,
\begin{equation}\label{smallo}
    \begin{aligned}
        S(\bar{\varepsilon}+\Delta{\varepsilon})-\sigma(\bar{\varepsilon}+\Delta{\varepsilon})&=\left\|G^{-1}(-\nabla_{\varepsilon} f(\bar{\varepsilon}, \bar{\eta}) \Delta \varepsilon-o(\Delta \varepsilon, \eta-\bar \eta))-G^{-1}(-\nabla_{\varepsilon} f(\bar{\varepsilon}, \bar{\eta}) \Delta \varepsilon)\right\| \\ &\leq k\|o(\Delta \varepsilon, \eta-\bar \eta)\|.
    \end{aligned}
\end{equation} 

Substituting (\ref{bigo}) to (\ref{smallo}) yields:
\begin{equation}\label{smallo}
    \begin{aligned}
        S(\bar{\varepsilon}+\Delta{\varepsilon})-\sigma(\bar{\varepsilon}+\Delta{\varepsilon})&\leq k\|o(\Delta \varepsilon, \eta-\bar \eta)\|=o(\|\Delta \varepsilon\|).
    \end{aligned}
\end{equation}

This proves that $\sigma(\varepsilon)$ is a first-order approximation of $S(\varepsilon)$ at $\bar\varepsilon$.
\end{proof}

\subsection{Proof of Theorem~\ref{thm:5}}\label{proof of thm5}

Following Lemma~\ref{lemma15} and Remark~\ref{remark:single_lip}, we establish that the mapping $G_0^{-1}$ is locally Lipschitz continuous.
Moreover, by Lemma~\ref{lem:G-vs-G0}, it holds that

$$
G^{-1}(v)=G_0^{-1}(v)+\bar{\eta}
$$

which implies that $G^{-1}$ inherits the Lipschitz continuity and single-valuedness of $G_0^{-1}$.
In this case, the Lipschitz continuity of $G^{-1}$ ensures the applicability of Lemma~\ref{G is first order approxomation}, which shows that $\sigma(\varepsilon)$ serves as a first-order local approximation of the solution mapping $S(\varepsilon)$..

\medskip
Indeed, for small $\Delta\varepsilon$,
\begin{align}
\sigma(\bar\varepsilon+\Delta\bar\varepsilon)
&= G^{-1}\big(-\nabla_\varepsilon f(\bar\varepsilon,\bar\eta)\,\Delta\bar\varepsilon\big)\\[2mm]
&= G_0^{-1}\big(-\nabla_\varepsilon f(\bar\varepsilon,\bar\eta)\,\Delta\bar\varepsilon\big)+\bar\eta.
\end{align}
Therefore,
\begin{align}
S(\bar\varepsilon+\Delta\bar\varepsilon)-S(\bar\varepsilon)
&=\sigma(\bar\varepsilon+\Delta\bar\varepsilon)-\bar\eta + o(\|\Delta \varepsilon\|) \\
&=G_0^{-1}\!\big(-\nabla_\varepsilon f(\bar\varepsilon,\bar\eta)\,\Delta\bar\varepsilon\big)
+ o(\|\Delta\bar\varepsilon\|) \notag\\
&=\bar s \big(-\nabla_\varepsilon f(\bar\varepsilon,\bar\eta)\,\Delta\bar\varepsilon\big)
+ o(\|\Delta\bar\varepsilon\|).
\end{align}

We obtain
\begin{align}
\lim_{t\downarrow 0}\frac{S(\bar\varepsilon+t {\Delta\bar\varepsilon})-S(\bar\varepsilon)}{t}
&= \lim_{t\downarrow 0}\frac{\bar s \big(-\nabla_\varepsilon f(\bar\varepsilon,\bar\eta)\,t\Delta\bar\varepsilon\big)
+ o(\|\Delta\bar\varepsilon\|)}{t}\\[2mm]
\end{align}

Since $N_K$ is the normal cone mapping of the convex cone $K$, it satisfies $N_K(\alpha w)=\alpha N_K(w)$ for all $\alpha>0$. Together with the linearity of $A$, this gives $\left(A+N_K\right)(\alpha w)=\alpha\left(A+N_K\right)(w)$. Hence $\bar{s}=\left(A+N_K\right)^{-1}$ is positively homogeneous,
\begin{equation}
\lim_{t\downarrow 0}\frac{\bar s \big(-\nabla_\varepsilon f(\bar\varepsilon,\bar\eta)\,t\Delta\bar\varepsilon\big)
+ o(\|\Delta\bar\varepsilon\|)}{t}
= \bar{s}(-\nabla_\varepsilon f(\bar\varepsilon,\bar\eta)\,\Delta\bar\varepsilon).
\end{equation}

Hence, $S$ is directionally differentiable at $\bar\varepsilon$ with
\[
D S(\bar\varepsilon)(\Delta \bar\varepsilon) 
=\bar{s} \big(-\nabla_\varepsilon f(\bar\varepsilon,\bar\eta)\,\Delta \bar\varepsilon\big).
\]

Equation~(\ref{barssolution}) ensures that
$$
\bar{s}\left(-\nabla_{\varepsilon} f(\bar{\varepsilon}, \bar{\eta}) \Delta \bar{\varepsilon}\right)
$$
is the solution of the auxiliary VI~(\ref{aux vi}).
This implies that, under the assumptions of Theorem~\ref{thm:5},
the solution mapping $S$ is directionally differentiable at $\bar{\varepsilon}$,
and its directional derivative is given by the solution of the auxiliary VI~(\ref{aux vi}).
Since $S(\varepsilon)=(\theta, \lambda)$,
the DIF $D \theta(\bar{\varepsilon} ; \Delta \bar{\varepsilon})$ defined in Definition~\ref{def:DIF} corresponds to
the $\theta$-component of$
D S(\bar{\varepsilon})(\Delta \bar{\varepsilon}).$ This guarantees the existence of the DIF.

\subsection{Proof of Proposition~\ref{pro:aux vi}} \label{proof of propostion6}
\begin{proof}
Given $(\bar{\varepsilon}, \bar{\theta}, \bar{\lambda})$ satisfy the optimality condition~(\ref{new VI}), we have  
\begin{equation}\label{24}
\nabla_{\theta} L(\bar{\varepsilon}, \bar{\theta}, \bar{\lambda}) + N_{\mathbb{R}^d}(\bar{\theta}) \ni 0 ,
\quad
- \nabla_{\lambda} L(\bar{\varepsilon}, \bar{\theta}, \bar{\lambda}) + N_{\mathbb{R}^m_+}(\bar{\lambda}) \ni 0 .
\end{equation}

Following the definition of the normal cone (Def.~\ref{def:normal cone}), condition (\ref{24}) implies that:
\begin{equation}\label{25}
\theta \in \mathbb{R}^d , \nabla_{\theta} L(\bar{\varepsilon}, \bar{\theta}, \bar{\lambda}) = 0 ,
\end{equation}
\begin{equation}
\lambda \in \mathbb{R}^m_+ , -\nabla_{\lambda} L(\bar{\varepsilon}, \bar{\theta}, \bar{\lambda}) \le 0 .
\end{equation}
Note that ${R}^m_+$ is a closed, convex cone. Following condition~(\ref{22}), we have
\begin{equation}
\nabla_{\lambda} L(\bar{\varepsilon}, \bar{\theta}, \bar{\lambda}) \cdot \bar{\lambda} = 0 ,
\end{equation}
which is consistent with the complementary slackness condition in the KKT system.  

Now we derive the equivalent of VI~(\ref{vi-linearized}) and VI~(\ref{aux vi}). We assume the $\Delta \bar{\varepsilon}, \Delta \hat{\theta},\Delta \hat{\lambda}$ are sufficiently close to 0.

VI~(\ref{vi-linearized}) requires that

\begin{equation}\label{6a}
\begin{aligned}
&\nabla_{\theta} L(\bar{\varepsilon}, \bar{\theta}, \bar{\lambda})
+ \nabla_{\theta\varepsilon} L(\bar{\varepsilon}, \bar{\theta}, \bar{\lambda}) \Delta \bar{\varepsilon} 
+ \nabla_{\theta}^2 L(\bar{\varepsilon}, \bar{\theta}, \bar{\lambda}) \Delta \hat{\theta} 
+ \nabla_{\theta\lambda} L(\bar{\varepsilon}, \bar{\theta}, \bar{\lambda}) \Delta \hat{\lambda} + N_{\mathbb{R}^d}(\bar{\theta} + \Delta \hat{\theta}) \ni 0 \\[2ex]
&\Updownarrow \\[2ex]
&\bar{\theta} + \Delta \hat{\theta} \in \mathbb{R}^d,\quad
\nabla_{\theta} L(\bar{\varepsilon}, \bar{\theta}, \bar{\lambda})
+ \nabla_{\theta\varepsilon} L(\bar{\varepsilon}, \bar{\theta}, \bar{\lambda}) \Delta \bar{\varepsilon} + \nabla_{\theta}^2 L(\bar{\varepsilon}, \bar{\theta}, \bar{\lambda}) \Delta \hat{\theta}
+ \nabla_{\theta\lambda} L(\bar{\varepsilon}, \bar{\theta}, \bar{\lambda}) \Delta \hat{\lambda} = 0
\end{aligned}
\end{equation}

and
\begin{equation}\label{6b}
\begin{aligned}
&- \nabla_{\lambda} L(\bar{\varepsilon}, \bar{\theta}, \bar{\lambda}) 
- \nabla_{\lambda\varepsilon} L(\bar{\varepsilon}, \bar{\theta}, \bar{\lambda}) \Delta \bar{\varepsilon} 
- \nabla_{\lambda\theta} L(\bar{\varepsilon}, \bar{\theta}, \bar{\lambda}) \Delta \hat{\theta} 
- \nabla_{\lambda}^2 L(\bar{\varepsilon}, \bar{\theta}, \bar{\lambda}) \Delta \hat{\lambda} 
+ N_{\mathbb{R}^m_+} (\bar{\lambda} + \Delta \hat{\lambda}) \ni 0 
\\[2ex]
&\Updownarrow \\[2ex]
&\bar{\lambda} + \Delta \hat{\lambda} \in \mathbb{R}^m_+,\quad  \nabla_{\lambda} L(\bar{\varepsilon}, \bar{\theta}, \bar{\lambda}) 
+\nabla_{\lambda\varepsilon} L(\bar{\varepsilon}, \bar{\theta}, \bar{\lambda}) \Delta \bar{\varepsilon} 
+\nabla_{\lambda\theta} L(\bar{\varepsilon}, \bar{\theta}, \bar{\lambda}) \Delta \hat{\theta} 
+ \nabla_{\lambda}^2 L(\bar{\varepsilon}, \bar{\theta}, \bar{\lambda}) \Delta \hat{\lambda} \leq 0
\end{aligned}
\end{equation}

It is straightforward to verify that $\nabla_{\lambda}^2 L(\bar{\varepsilon}, \bar{\theta}, \bar{\lambda}) = 0$.  By equation~(\ref{25}), $ \nabla_{\theta} L(\bar{\varepsilon}, \bar{\theta}, \bar{\lambda})=0$. Substituting  $\nabla_{\theta} L(\bar{\varepsilon}, \bar{\theta}, \bar{\lambda})=0$ to (\ref{6a}) and $\nabla_{\lambda}^2 L(\bar{\varepsilon}, \bar{\theta}, \bar{\lambda}) = 0$ to (\ref{6b}) yields
\begin{equation}\label{6aa}
\bar{\theta} + \Delta \hat{\theta} \in \mathbb{R}^d,\quad \nabla_{\theta\varepsilon} L(\bar{\varepsilon}, \bar{\theta}, \bar{\lambda}) \Delta \bar{\varepsilon} 
+ \nabla_{\theta}^2 L(\bar{\varepsilon}, \bar{\theta}, \bar{\lambda}) \Delta \hat{\theta} +  \nabla_{\theta\lambda} L(\bar{\varepsilon}, \bar{\theta}, \bar{\lambda}) \Delta \hat{\lambda}=0,
\end{equation}
and 
\begin{equation}\label{6bb}
\bar{\lambda} + \Delta \hat{\lambda} \in \mathbb{R}^m_+,\quad  \nabla_{\lambda} L(\bar{\varepsilon}, \bar{\theta}, \bar{\lambda}) 
+\nabla_{\lambda\varepsilon} L(\bar{\varepsilon}, \bar{\theta}, \bar{\lambda}) \Delta \bar{\varepsilon} 
+\nabla_{\lambda\theta} L(\bar{\varepsilon}, \bar{\theta}, \bar{\lambda}) \Delta \hat{\theta}  \leq 0
\end{equation}

Since $\mathbb{R}_{+}^m$ is a closed convex cone, by~(\ref{22}), we have
\begin{equation}\label{6c}
\begin{aligned}
\big(
\nabla_{\lambda} L(\bar{\varepsilon}, \bar{\theta}, \bar{\lambda})
+ \nabla_{\lambda\varepsilon} L(\bar{\varepsilon}, \bar{\theta}, \bar{\lambda}) \, \Delta \bar{\varepsilon}
+ \nabla_{\lambda\theta} L(\bar{\varepsilon}, \bar{\theta}, \bar{\lambda}) \, \Delta \hat{\theta}
\big)
\cdot (\bar{\lambda} + \Delta \hat{\lambda})=0
\end{aligned}
\end{equation}

In summary, VI~(\ref{vi-linearized}) is equivalent to the system consisting of (\ref{6aa}), (\ref{6bb}), and (\ref{6c}). Note that $\lambda=[\lambda_1,\dots,\lambda_j]$, we now further discuss the formulation (\ref{6bb}) and (\ref{6c}) for $j \in I_{\text{Inactive}}, j \in I_{\text{Binding}}$, and $j \in I_{\text{Non-binding}}$.
  
\textbf{Case 1.} 
If $j \in I_{\text{Inactive}}$, then 
$\nabla_{\lambda_j} L(\bar{\varepsilon}, \bar{\theta}, \bar{\lambda}) < 0$ 
and $\bar{\lambda}_j = 0$. 
Since $\Delta \bar{\varepsilon}$, $\Delta \hat{\theta}$, and $\Delta \hat{\lambda}$ 
are all sufficiently close to $0$ and 
$\nabla_{\lambda_j} L(\bar{\varepsilon}, \bar{\theta}, \bar{\lambda}) < 0$, 
condition~(\ref{6bb})
\begin{equation*}
\nabla_{\lambda_j} L(\bar{\varepsilon}, \bar{\theta}, \bar{\lambda})
+ \nabla_{\lambda_j\varepsilon} L(\bar{\varepsilon}, \bar{\theta}, \bar{\lambda}) \Delta \bar{\varepsilon}
+ \nabla_{\lambda_j\theta} L(\bar{\varepsilon}, \bar{\theta}, \bar{\lambda}) \Delta \hat{\theta}
\le 0
\end{equation*}
is automatically satisfied. 
Therefore, the term 
\begin{equation}\label{eq:free-term}
\nabla_{\lambda_j\varepsilon} L(\bar{\varepsilon}, \bar{\theta}, \bar{\lambda}) \, \Delta \bar{\varepsilon}
+ \nabla_{\lambda_j\theta} L(\bar{\varepsilon}, \bar{\theta}, \bar{\lambda}) \, \Delta \hat{\theta}
\;\; \text{is free.}
\end{equation}
Moreover, condition~(\ref{6c}) implies that 
\begin{equation}
    \Delta \hat{\lambda}_j = 0 \quad \text{for } j \in I_{\text{Inactive}}.
\end{equation}

\textbf{Case 2.} 
If $j \in I_{\text{Non-binding}}$, then 
$\nabla_{\lambda_j} L(\bar{\varepsilon}, \bar{\theta}, \bar{\lambda}) = 0$ 
and $\bar{\lambda}_j = 0$.
Substituting $\nabla_{\lambda_j} L(\bar{\varepsilon}, \bar{\theta}, \bar{\lambda}) = 0$ and $\bar{\lambda}_j = 0$ into condition~(\ref{6bb}) yields
\begin{equation}\label{eq:nonbinding}
\begin{aligned}
\Delta \hat{\lambda}_j &\ge 0 
\quad &&\text{for } j \in I_{\text{Non-binding}}, \\
\nabla_{\lambda_j \varepsilon} L(\bar{\varepsilon}, \bar{\theta}, \bar{\lambda}) \, \Delta \bar{\varepsilon}
+ \nabla_{\lambda_j \theta} L(\bar{\varepsilon}, \bar{\theta}, \bar{\lambda}) \, \Delta \hat{\theta} &\le 0
\quad &&\text{for } j \in I_{\text{Non-binding}}.
\end{aligned}
\end{equation}

\textbf{Case 3.} 
If $j \in I_{\text{Binding}}$, then 
$\nabla_{\lambda_j} L(\bar{\varepsilon}, \bar{\theta}, \bar{\lambda}) = 0$ 
and $\bar{\lambda}_j \ge 0$.
Since $\Delta \hat{\lambda}$ is close to $0$, 
$\bar{\lambda}+\Delta \hat{\lambda} \in \mathbb{R}_{+}^m$ 
is automatically satisfied. 
To satisfy conditions~(\ref{6bb}) and~(\ref{6c}), we must have 
\[
\nabla_{\lambda_j \varepsilon} L(\bar{\varepsilon}, \bar{\theta}, \bar{\lambda}) \, \Delta \bar{\varepsilon}
+ \nabla_{\lambda_j \theta} L(\bar{\varepsilon}, \bar{\theta}, \bar{\lambda}) \, \Delta \hat{\theta} = 0.
\]
In summary, we obtain
\begin{equation}\label{eq:binding}
\begin{aligned}
\Delta \hat{\lambda}_j \;\; \text{is free.}
\quad &&\text{for } j \in I_{\text{Binding}}, \\
\nabla_{\lambda_j \varepsilon} L(\bar{\varepsilon}, \bar{\theta}, \bar{\lambda}) \, \Delta \bar{\varepsilon}
+ \nabla_{\lambda_j \theta} L(\bar{\varepsilon}, \bar{\theta}, \bar{\lambda}) \, \Delta \hat{\theta} = 0
\quad &&\text{for } j \in I_{\text{Binding}}.
\end{aligned}
\end{equation}

Taken together, Cases~1--3 show that 
VI~(\ref{vi-linearized}) is equivalent to the system consisting of (\ref{6aa}) and (\ref{eq:free-term}--\ref{eq:binding}).On the other hand, the following argument shows that VI~(\ref{aux vi}) is also equivalent to this system.

The VI~(\ref{aux vi}) implies that: 

\begin{equation}\label{6d}
\begin{aligned}
&\nabla_{\theta\varepsilon} L(\bar{\varepsilon}, \bar{\theta}, \bar{\lambda}) \Delta \bar{\varepsilon} 
+ \nabla_{\theta}^2 L(\bar{\varepsilon}, \bar{\theta}, \bar{\lambda}) \Delta \hat{\theta} 
+ \nabla_{\theta\lambda} L(\bar{\varepsilon}, \bar{\theta}, \bar{\lambda}) \Delta \hat{\lambda} + N_{\mathbb{R}^d}(\Delta \hat{\theta}) \ni 0 \\[2ex]
&\Updownarrow \\[2ex]
&\Delta \hat{\theta} \in \mathbb{R}^d,\quad
\nabla_{\theta\varepsilon} L(\bar{\varepsilon}, \bar{\theta}, \bar{\lambda}) \Delta \bar{\varepsilon} + \nabla_{\theta}^2 L(\bar{\varepsilon}, \bar{\theta}, \bar{\lambda}) \Delta \hat{\theta}
+ \nabla_{\theta\lambda} L(\bar{\varepsilon}, \bar{\theta}, \bar{\lambda}) \Delta \hat{\lambda} = 0
\end{aligned}
\end{equation}

and
\begin{equation}\label{6e}
\begin{aligned}
&- \nabla_{\lambda\varepsilon} L(\bar{\varepsilon}, \bar{\theta}, \bar{\lambda}) \Delta \bar{\varepsilon} 
- \nabla_{\lambda\theta} L(\bar{\varepsilon}, \bar{\theta}, \bar{\lambda}) \Delta \hat{\theta} 
- \nabla_{\lambda}^2 L(\bar{\varepsilon}, \bar{\theta}, \bar{\lambda}) \Delta \hat{\lambda} 
+ N_{D} (\Delta \hat{\lambda}) \ni 0 
\\[2ex]
&\Updownarrow \\[2ex]
&\Delta \hat{\lambda} \in D,\quad  
(-\nabla_{\lambda\varepsilon} L(\bar{\varepsilon}, \bar{\theta}, \bar{\lambda}) \Delta \bar{\varepsilon} 
-\nabla_{\lambda\theta} L(\bar{\varepsilon}, \bar{\theta}, \bar{\lambda}) \Delta \hat{\theta} 
-\nabla_{\lambda}^2 L(\bar{\varepsilon}, \bar{\theta}, \bar{\lambda}) \Delta \hat{\lambda})(\Delta \hat{\lambda}^\prime- \hat{\lambda}) \leq 0, \forall \Delta \hat{\lambda}^\prime \in D
\end{aligned}
\end{equation}
Since $D$ is a closed convex cone, by~(\ref{22}), we have
\begin{equation}\label{6f}
\begin{aligned}
\big(
\nabla_{\lambda\varepsilon} L(\bar{\varepsilon}, \bar{\theta}, \bar{\lambda}) \, \Delta \bar{\varepsilon}
+ \nabla_{\lambda\theta} L(\bar{\varepsilon}, \bar{\theta}, \bar{\lambda}) \, \Delta \hat{\theta}
\big)
\cdot \Delta \hat{\lambda}=0
\end{aligned}
\end{equation}

\textbf{Case 1.} If $j \in I_{\text{Inactive}}$, then by the definition of the space $D$ (see Proposition~\ref{pro:aux vi}), 
we have 
\begin{equation}\label{6ee}
\Delta \hat{\lambda}_j = 0 \;\;, \forall\, \Delta \hat{\lambda} \in D
\end{equation}
Thus, condition~(\ref{6e}) is automatically satisfied. 
Therefore, the term 
\begin{equation}
\nabla_{\lambda_j\varepsilon} L(\bar{\varepsilon}, \bar{\theta}, \bar{\lambda}) \, \Delta \bar{\varepsilon}
+ \nabla_{\lambda_j\theta} L(\bar{\varepsilon}, \bar{\theta}, \bar{\lambda}) \, \Delta \hat{\theta}
\;\; \text{is free.}
\end{equation}

\textbf{Case 2.} If $j \in I_{\text{Non-binding}}$, then by the definition of the space $D$ (see Proposition~\ref{pro:aux vi}), 
we have \begin{equation}
\Delta \hat{\lambda}_j \geq 0 ,\;\; \forall\, \Delta \hat{\lambda} \in D    
\end{equation}
By the condition~(\ref{6f}),
\begin{equation}
\nabla_{\lambda_j \varepsilon} L(\bar{\varepsilon}, \bar{\theta}, \bar{\lambda}) \Delta \bar{\varepsilon}+\nabla_{\lambda_j \theta} L(\bar{\varepsilon}, \bar{\theta}, \bar{\lambda}) \Delta \hat{\theta} \leq 0 \quad \text { for } j \in I_{\text {Non-binding }} .
\end{equation}

\textbf{Case 3.} If $j \in I_{\text{binding}}$, then by the definition of the space $D$ (see Proposition~\ref{pro:aux vi}), 
we have \begin{equation}
\Delta \hat{\lambda}_j \quad \text{is free}  
\end{equation}
By the condition~(\ref{6f}),
\begin{equation}\label{6ff}
\nabla_{\lambda_j \varepsilon} L(\bar{\varepsilon}, \bar{\theta}, \bar{\lambda}) \Delta \bar{\varepsilon}+\nabla_{\lambda_j \theta} L(\bar{\varepsilon}, \bar{\theta}, \bar{\lambda}) \Delta \hat{\theta}=0 \quad \text { for } j \in I_{\text {Binding }}
\end{equation}

Since (\ref{6d},\ref{6ee}--\ref{6ff}) and (\ref{6aa},\ref{eq:free-term}--\ref{eq:binding}) 
are equivalent formulations, 
VI~(\ref{aux vi}) is equivalent to VI~(\ref{vi-linearized}). \qed
\end{proof}

% \subsection{Proof 2}

% \begin{lemma}[Reduction Lemma; see 2E.4 in~\cite{dontchev2009implicit}]
% Let $C\subset\R^n$ be a polyhedral convex set, $\bar x\in C$, $\bar v\in N_C(\bar x)$, and let
% $K := K_C(\bar x,\bar v)$ be the critical cone.
% Then there is a neighborhood $O$ of $(0,0)$ such that
% \[
% O\cap\big[\operatorname{gph}N_C-(\bar x,\bar v)\big]
% \;=\;
% O\cap \operatorname{gph}N_K.
% \]
% Equivalently, for $(w,u)$ sufficiently close to $(0,0)$,
% \[
% \bar v+u \in N_C(\bar x+w)\quad \Longleftrightarrow\quad u\in N_K(w).
% \]
% \end{lemma}

% \subsection{Proof of corollary~\ref{thm:DIF-error}}\label{proofofDIF-error}

% By the inequality~(\ref{DIFerror1}), we have 

% \begin{equation}
% (1-k \delta)\|\eta-\bar{\eta}\| \leq k\left(\left\|\nabla_{\varepsilon} f(\bar{\varepsilon}, \bar{\eta})\right\|+\delta\right)\|\Delta \varepsilon\|
% \end{equation}

% Since 

% $\theta$

\subsection{Proof of corollary~\ref{corollary2}}\label{app:positivehomo}
\begin{proof}
Recall the directional derivative

$$
D \theta(\bar{\varepsilon} ; v):=\lim _{t \downarrow 0, v^{\prime} \rightarrow v} \frac{\theta\left(\bar{\varepsilon}+t v^{\prime}\right)-\theta(\bar{\varepsilon})}{t},
$$

whenever the limit exists and is finite.

Let $\alpha \geq 0$.
- If $\alpha=0$ : by the definition with $v \equiv 0$,

$$
D \theta(\bar{\varepsilon} ; 0)=\lim _{t \downarrow 0, v^{\prime} \rightarrow 0} \frac{\theta\left(\bar{\varepsilon}+t v^{\prime}\right)-\theta(\bar{\varepsilon})}{t}=0,
$$

hence $0 \cdot D \theta(\bar{\varepsilon} ; v)=D \theta(\bar{\varepsilon} ; 0)$.
- If $\alpha>0$ : using the definition with the direction $\alpha v$,

$$
D \theta(\bar{\varepsilon} ; \alpha v)=\lim _{t \downarrow 0, u^{\prime} \rightarrow \alpha v} \frac{\theta\left(\bar{\varepsilon}+t u^{\prime}\right)-\theta(\bar{\varepsilon})}{t} .
$$

Choose $u^{\prime}=\alpha v^{\prime}$ with $v^{\prime} \rightarrow v$. Then

$$
\begin{aligned}
D \theta(\bar{\varepsilon} ; \alpha v) & =\lim _{t \downarrow 0, v^{\prime} \rightarrow v} \frac{\theta\left(\bar{\varepsilon}+t\left(\alpha v^{\prime}\right)\right)-\theta(\bar{\varepsilon})}{t}=\lim _{t \downarrow 0, v^{\prime} \rightarrow v} \frac{\theta\left(\bar{\varepsilon}+(\alpha t) v^{\prime}\right)-\theta(\bar{\varepsilon})}{t} \\
& =\lim _{s \downarrow 0, v^{\prime} \rightarrow v} \frac{\theta\left(\bar{\varepsilon}+s v^{\prime}\right)-\theta(\bar{\varepsilon})}{s / \alpha} \quad(\text { set } s=\alpha t) \\
& =\alpha \lim _{s \downarrow 0, v^{\prime} \rightarrow v} \frac{\theta\left(\bar{\varepsilon}+s v^{\prime}\right)-\theta(\bar{\varepsilon})}{s}=\alpha D \theta(\bar{\varepsilon} ; v) .
\end{aligned}
$$

Combining the two cases yields

$$
\alpha D \theta(\bar{\varepsilon} ; \Delta \bar{\varepsilon})=D \theta(\bar{\varepsilon} ; \alpha \Delta \bar{\varepsilon}), \quad \forall \alpha \in \mathbb{R}^{+}
$$

\end{proof}
\subsection{Proof of proposition~\ref{proposition7}}\label{proofof7}

Section~\ref{proof of thm5} has proved that the directional derivative of the
solution mapping $S$ is characterized by the auxiliary VI~(\ref{aux vi}). Specifically,
\begin{equation}
D S(\bar\varepsilon)(\Delta\bar\varepsilon)=\Delta\hat\eta
\quad\text{where}\quad
\mu\,\Delta\bar\varepsilon + A\,\Delta\hat\eta + N_K(\Delta\hat\eta)\ni 0 .
\end{equation}

Since the DIF $D\theta(\bar\varepsilon;\Delta\bar\varepsilon)$ defined in
Definition~\ref{def:DIF} is the $\theta$-component of
$D S(\bar\varepsilon)(\Delta\bar\varepsilon)$, we have
\begin{equation}
\Delta\hat\theta \;=\; D\theta(\bar\varepsilon;\Delta\bar\varepsilon).
\end{equation}

\subsection{Proof of Theorem~\ref{thm:Auxiliary problem}}\label{proofof8}
Define the Lagrangian of QP~(\ref{QP}) as
\begin{equation}
\small
\begin{aligned}
\mathcal{L}_{QP}(\omega,\zeta)
=&\; L(\bar{\varepsilon}, \bar{\theta}, \bar{\lambda}) 
+ \big\langle \nabla_{\theta \varepsilon} L(\bar{\varepsilon}, \bar{\theta}, \bar{\lambda})\!\cdot\!\Delta \bar{\varepsilon},\; \omega \big\rangle 
+ \frac{1}{2}\big\langle \omega,\; \nabla_{\theta\theta}^2 L(\bar{\varepsilon}, \bar{\theta}, \bar{\lambda})\,\omega \big\rangle\\
&\;+\sum_{j\in I_{\text{binding}}}\zeta_j\!\left[
\frac{1}{N_j}\sum_{i=1}^{N_j}\ell_j(z_i^{(j)},\bar\theta)
+\frac{1}{N_j}\sum_{i=1}^{N_j}\nabla_\theta \ell_j(z_i^{(j)},\bar\theta)\!\cdot\!\omega
+\sum_{z_i^{(j)}\in Z^r}\Delta \bar{\varepsilon}_j\,\ell_j(z_i^{(j)},\bar\theta)
- \tau_j\right]\\
&\;+\sum_{j\in I_{\text{non-binding}}}\zeta_j\!\left[
\frac{1}{N_j}\sum_{i=1}^{N_j}\ell_j(z_i^{(j)},\bar\theta)
+\frac{1}{N_j}\sum_{i=1}^{N_j}\nabla_\theta \ell_j(z_i^{(j)},\bar\theta)\!\cdot\!\omega
+\sum_{z_i^{(j)}\in Z^r}\Delta \bar{\varepsilon}_j\,\ell_j(z_i^{(j)},\bar\theta)
- \tau_j\right]\\
&\;+\sum_{j\in I_{\text{Inactive}}}\zeta_j\!\left[
\frac{1}{N_j}\sum_{i=1}^{N_j}\ell_j(z_i^{(j)},\bar\theta)
+\frac{1}{N_j}\sum_{i=1}^{N_j}\nabla_\theta \ell_j(z_i^{(j)},\bar\theta)\!\cdot\!\omega
+\sum_{z_i^{(j)}\in Z^r}\Delta \bar{\varepsilon}_j\,\ell_j(z_i^{(j)},\bar\theta)
- \tau_j\right].
\end{aligned}
\end{equation}

A pair \((\omega^\star,\zeta^\star)\) satisfies the Karush–Kuhn–Tucker (KKT) conditions if:

\text{(i) Primal feasibility.}
\begin{align}
&\frac{1}{N_j}\sum_{i=1}^{N_j}\ell_j(z_i^{(j)},\bar\theta)
+\frac{1}{N_j}\sum_{i=1}^{N_j}\nabla_\theta \ell_j(z_i^{(j)},\bar\theta)\!\cdot\!\omega^\star
+\sum_{z_i^{(j)}\in Z^r}\Delta \bar{\varepsilon}_j\,\ell_j(z_i^{(j)},\bar\theta)
- \tau_j = 0,\qquad &&\forall j\in I_{\text{binding}},\\[2pt]
&\frac{1}{N_j}\sum_{i=1}^{N_j}\ell_j(z_i^{(j)},\bar\theta)
+\frac{1}{N_j}\sum_{i=1}^{N_j}\nabla_\theta \ell_j(z_i^{(j)},\bar\theta)\!\cdot\!\omega^\star
+\sum_{z_i^{(j)}\in Z^r}\Delta \bar{\varepsilon}_j\,\ell_j(z_i^{(j)},\bar\theta)
- \tau_j \le 0,\qquad &&\forall j\in I_{\text{non-binding}}.
\end{align}
For \(j\in I_{\text{Inactive}}\), there is no constraint.

\text{(ii) Stationarity.}\label{Stationarity}
\begin{equation}
\small
\begin{aligned}
\nabla_{\theta \varepsilon} L(\bar{\varepsilon}, \bar{\theta}, \bar{\lambda})\!\cdot\! \Delta \bar{\varepsilon}
+\nabla_{\theta\theta}^2 L(\bar{\varepsilon}, \bar{\theta}, \bar{\lambda})\,\omega^\star
+\sum_{j\in I_{\text{binding}}\cup I_{\text{non-binding}}\cup I_{\text{Inactive}}}
\zeta_j^\star\!\left(\frac{1}{N_j}\sum_{i=1}^{N_j}\nabla_\theta \ell_j(z_i^{(j)},\bar\theta)\right)
=0.
\end{aligned}
\end{equation}

\text{(iii) Dual feasibility.}
\begin{equation}
\zeta_j^\star \ge 0,\qquad \forall j\in I_{\text{non-binding}}.
\end{equation}

\text{(iv) Complementary slackness.}
\begin{equation}
\zeta_j^\star\!\left[
\frac{1}{N_j}\sum_{i=1}^{N_j}\ell_j(z_i^{(j)},\bar\theta)
+\frac{1}{N_j}\sum_{i=1}^{N_j}\nabla_\theta \ell_j(z_i^{(j)},\bar\theta)\!\cdot\!\omega^\star
+\sum_{z_i^{(j)}\in Z^r}\Delta \bar{\varepsilon}_j\,\ell_j(z_i^{(j)},\bar\theta)
- \tau_j\right]=0,\qquad \forall j\in I_{\text{non-binding}}.
\end{equation}

We now show that any \((\omega^\star,\zeta^\star)\) satisfying the above KKT conditions also satisfies the auxiliary VI~(\ref{aux vi}). 
Recall that in Section~\ref{proof of propostion6}, we have proved that the auxiliary VI~(\ref{aux vi}) is equivalent to the system \((\ref{6d},\ref{6ee}--\ref{6ff})\).

\medskip\noindent
Note that
\[
\begin{aligned}
L(\varepsilon,\theta,\lambda)
&=\frac{1}{N_0}\sum_{i=1}^{N_0}\ell_0(z_i^{(0)},\theta)
+\varepsilon_0\!\!\sum_{z_i^{(0)}\in Z^r}\!\ell_0(z_i^{(0)},\theta)\\
&\quad+\sum_{j=1}^m\lambda_j\!\left[
\frac{1}{N_j}\sum_{i=1}^{N_j}\ell_j(z_i^{(j)},\theta)
+\varepsilon_j\!\!\sum_{z_i^{(j)}\in Z^r}\!\ell_j(z_i^{(j)},\theta)-\tau_j
\right],
\end{aligned}
\]
we have
\begin{equation}
\frac{1}{N_j}\sum_{i=1}^{N_j}\nabla_\theta\ell_j(z_i^{(j)},\bar\theta)
=\nabla_{\theta\lambda_j}L(\bar\varepsilon,\bar\theta,\bar\lambda).
\end{equation}

\medskip\noindent

The stationarity condition can be rewritten as
\begin{equation}\label{kkt1}
\nabla_{\theta\varepsilon}L(\bar\varepsilon,\bar\theta,\bar\lambda)\,\Delta\bar\varepsilon
+\nabla_{\theta}^2L(\bar\varepsilon,\bar\theta,\bar\lambda)\,\omega^\star
+\nabla_{\theta\lambda}L(\bar\varepsilon,\bar\theta,\bar\lambda)\,\zeta^\star=0.
\end{equation}
By the definition of the normal cone (Definition~\ref{def:normal cone}), this is equivalent to
\begin{equation}\label{kktvi1}
\nabla_{\theta\varepsilon}L(\bar\varepsilon,\bar\theta,\bar\lambda)\,\Delta\bar\varepsilon
+\nabla_{\theta}^2L(\bar\varepsilon,\bar\theta,\bar\lambda)\,\omega^\star
+\nabla_{\theta\lambda}L(\bar\varepsilon,\bar\theta,\bar\lambda)\,\zeta^\star
\in N_{\mathbb{R}^d}(\Delta\hat\theta).
\end{equation}

\medskip\noindent
We distinguish the three index sets.

\textbf{Case 1 (\(j\in I_{\text{Inactive}}\)).} 
By complementary slackness, \(\zeta_j^\star=0\).

\medskip
\textbf{Case 2 (\(j\in I_{\text{non-binding}}\)).} 
By complementary slackness, \(\zeta_j^\star\ge 0\).
Moreover, since 
\(\frac{1}{N_j}\sum_{i=1}^{N_j}\ell_j(z_i^{(j)},\bar\theta)=\tau_j\),
substituting 
\(\nabla_{\lambda_j\varepsilon}L(\bar\varepsilon,\bar\theta,\bar\lambda)\,\Delta\bar\varepsilon
=\sum_{z_i^{(j)}\in Z^r}\Delta\bar\varepsilon_j\ell_j(z_i^{(j)},\bar\theta)\)
and 
\(\nabla_{\lambda_j\theta}L(\bar\varepsilon,\bar\theta,\bar\lambda)\,\omega^\star
=\frac{1}{N_j}\sum_{i=1}^{N_j}\nabla_\theta\ell_j(z_i^{(j)},\bar\theta)\omega^\star\)
into the primal feasibility condition gives
\begin{equation}
\nabla_{\lambda_j\varepsilon}L(\bar\varepsilon,\bar\theta,\bar\lambda)\,\Delta\bar\varepsilon
+\nabla_{\lambda_j\theta}L(\bar\varepsilon,\bar\theta,\bar\lambda)\,\omega^*
\le 0.
\end{equation}

\medskip
\textbf{Case 3 (\(j\in I_{\text{binding}}\)).} 
Here \(\zeta_j^\star\) is free.
Since 
\(\frac{1}{N_j}\sum_{i=1}^{N_j}\ell_j(z_i^{(j)},\bar\theta)=\tau_j\),
using the same substitutions into  of case 3 into primal feasibility as above yields
\begin{equation}
\nabla_{\lambda_j\varepsilon}L(\bar\varepsilon,\bar\theta,\bar\lambda)\,\Delta\bar\varepsilon
+\nabla_{\lambda_j\theta}L(\bar\varepsilon,\bar\theta,\bar\lambda)\,\omega^*
=0.
\end{equation}

Combining all cases, \(\zeta^\star\) satisfies
\begin{equation}\label{106}
\nabla_{\lambda\varepsilon}L(\bar\varepsilon,\bar\theta,\bar\lambda)\,\Delta\bar\varepsilon
-\nabla_{\lambda\theta}L(\bar\varepsilon,\bar\theta,\bar\lambda)\,\omega^\star
-\nabla_{\lambda}^2L(\bar\varepsilon,\bar\theta,\bar\lambda)\,\zeta^\star
+N_{D^\prime}(\zeta^\star)\ni 0,   
\end{equation}

where
\[
D^\prime:=\Bigl\{\zeta^*\in\mathbb{R}^m\;\big|\;
\zeta^*_j\ge 0 \ (j\in I_{\text{non-binding}}),\ 
\zeta^*_j=0 \ (j\in I_{\text{Inactive}})
\Bigr\}.
\]

\medskip\noindent

(\ref{106}) together with~(\ref{kkt1}) shows that \((\omega^\star,\zeta^\star)\) satisfies
\[
0\in A(\omega^\star,\zeta^\star)+\mu\Delta\bar\varepsilon+N_K(\omega^\star,\zeta^\star),
\]
which is exactly the auxiliary VI~(\ref{aux vi}). \(\hfill\qed\)

\section{Details in Toy example}\label{appendixoftoy}

We consider the toy example~(\ref{eq:toy})
\begin{equation*}\label{eq:app-prob}
\min_{\theta\in\mathbb{R}^2}\;
\frac13(\theta^\top x_1-y_1)^2+\frac13(\theta^\top x_2-y_2)^2+\frac13(\theta^\top x_3-y_3)^2
+\varepsilon\,(\theta^\top x_2-y_2)^2
\quad\text{s.t.}\quad \|\theta\|_1\le 1,
\end{equation*}
with data \(x_1=(1,0),\,x_2=(1,0),\,x_3=(0,1)\) and
\(y_1=1,\,y_2=0,\,y_3=\tfrac12\).
At \(\varepsilon=0\), the constrained solution is 
\(\bar\theta=(0.5,\,0.5)\). The corresponding dual variable is $\bar{\lambda}=0$.

\paragraph{Gradients and Hessian.}
For a single squared loss \(\ell_i(\theta)=(\theta^\top x_i-y_i)^2\),
\[
\nabla_\theta \ell_i(\theta)=2(\theta^\top x_i-y_i)x_i,\qquad
\nabla^2_{\theta\theta}\ell_i(\theta)=2x_ix_i^\top.
\]
Hence, at \(\varepsilon=0\) the (unconstrained) Hessian of
\(\tfrac13\sum_{i=1}^3\ell_i\) is
\begin{equation}\label{eq:app-H}
H_{\bar{\theta}}=\frac13\!\sum_{i=1}^3 2x_ix_i^\top
=\frac{2}{3}\big(x_1x_1^\top+x_2x_2^\top+x_3x_3^\top\big)
=\begin{pmatrix}\frac{4}{3}&0\\[2pt]0&\frac{2}{3}\end{pmatrix}.
\end{equation}
The mixed derivative of the perturbed part is
\begin{equation}\label{eq:app-dLdte}
\nabla_{\theta\varepsilon} L(\bar\varepsilon,\bar\theta)
=\nabla_\theta \ell_2(\bar\theta)
=2(\bar\theta^\top x_2-y_2)x_2
=2\cdot 0.5\cdot (1,0)
=(1,0).
\end{equation}

\subsection{Classical IF (ignoring constraints)}
Differentiating the stationarity condition
\(\nabla_\theta L(\varepsilon,\theta)=0\) w.r.t. \(\varepsilon\) at
\((\bar\varepsilon,\bar\theta)\) gives
\[
H\,\frac{d\theta}{d\varepsilon}
+\nabla_{\theta\varepsilon}L(\bar\varepsilon,\bar\theta)=0
\quad\Rightarrow\quad
\left.\frac{d \theta(\varepsilon)}{d \varepsilon}\right|_{\varepsilon=0}=-H^{-1}_{\bar{\theta}}\nabla_{\theta\varepsilon}L(\bar\varepsilon,\bar\theta).
\]
Using (\ref{eq:app-H}–\ref{eq:app-dLdte}),
\begin{equation}
\left.\frac{d \theta(\varepsilon)}{d \varepsilon}\right|_{\varepsilon=0}
=-\begin{pmatrix}\frac{3}{4}&0\\[2pt]0&\frac{3}{2}\end{pmatrix}(1,0)
=\Big(-\tfrac34,\,0\Big).
\end{equation}
Removing sample \(z_2\) corresponds to \(\Delta\varepsilon=-\tfrac13\), so the IF
estimate is
\begin{equation}\label{eq:app-IF-step}
\Delta\theta_{\mathrm{IF}}
\approx \frac{d\theta}{d\varepsilon}\,\Delta\varepsilon
=\Big(-\tfrac34,\,0\Big)\!\cdot\!\Big(-\tfrac13\Big)
=\Big(\tfrac14,\,0\Big).
\end{equation}
Then \(\bar\theta+\Delta\theta_{\mathrm{IF}}=(0.75,\,0.5)\) whose
\(\ell_1\)-norm equals \(1.25\), i.e., the IF step is infeasible.

\subsection{DIF (feasible, sensitivity on the linearized VI)}
DIF linearizes the KKT/VI system at \((\bar\varepsilon,\bar\theta)\)
and searches \(\Delta\theta\) in the tangent subspace.
The QP~(\ref{QP}) corresponding to the toy problem~(\ref{eq:toy}) is
\begin{equation}\label{eq:app-DIF-QP}
\min_{\Delta\theta\in\mathbb{R}^2}\;
\big\langle \nabla_{\theta\varepsilon}L(\bar\varepsilon,\bar\theta)\,\Delta\varepsilon,
\;\Delta\theta\big\rangle
+\frac12\,\Delta\theta^\top H\,\Delta\theta
\quad\text{s.t.}\quad
\Delta\theta_1+\Delta\theta_2=0 .
\end{equation}
With \(\Delta\varepsilon=-\tfrac13\) and
\(b:=\nabla_{\theta\varepsilon}L\,\Delta\varepsilon=(-\tfrac13,0)\),
(\ref{eq:app-DIF-QP}) is
\[
\min_{\Delta\theta}\; b^\top\Delta\theta+\tfrac12\,\Delta\theta^\top H\,\Delta\theta
\quad\text{s.t.}\quad \Delta\theta_1+\Delta\theta_2=0 .
\]
The KKT conditions (with multiplier \(\mu\)) are
\[
H\Delta\theta+b+\mu(1,1)^\top=0,\qquad
\Delta\theta_1+\Delta\theta_2=0 .
\]
Expanding coordinates yields
\[
\frac{4}{3}\Delta\theta_1-\frac13+\mu=0,\qquad
\frac{2}{3}\Delta\theta_2+\mu=0,\qquad
\Delta\theta_1+\Delta\theta_2=0 .
\]
From the second and third equations \(\mu=-\frac{2}{3}\Delta\theta_2\) and
\(\Delta\theta_2=-\Delta\theta_1\).
Substituting into the first gives
\( \frac{4}{3}\Delta\theta_1-\frac13-\frac{2}{3}\Delta\theta_2=0
\Rightarrow 6\Delta\theta_1=1\),
hence
\begin{equation}\label{eq:app-DIF-sol}
{\;\Delta\theta_{\mathrm{DIF}}=(\tfrac16,\,-\tfrac16)\;}
\end{equation}
which lies on the tangent subspace and thus preserves feasibility
to first order.

\section{Details in Constrained Linear Regression}\label{appendix:constrained linear regression}

% ==== Setup ====
We consider per-sample
$$
\ell_i(\theta)=\tfrac12\,(x_i^\top\theta-y_i)^2,
$$
The constrained optimum $\bar{\theta}$ solves
\begin{equation}
\begin{aligned}
\min_{\theta}\quad & \tfrac{1}{n}\sum_{i=1}^n \ell_i(\theta)\\
\text{s.t.}\quad & A_{\mathrm{eq}}\theta=b_{\mathrm{eq}},\quad
A_{\mathrm{ineq}}\theta\le b_{\mathrm{ineq}} .
\end{aligned}
\end{equation}
Assume we remove the data sample $(x_i,y_i)$. We use
$$
H=\nabla^2 L(\bar{\theta})=\tfrac{1}{n}X^\top X,
\qquad
g_i=\nabla_\theta \ell_i(\bar{\theta})=(x_i^\top\bar{\theta}-y_i)\,x_i .
$$

% ==== IF ====
\noindent\textbf{Classical Influence Function (IF).}
Ignoring feasibility constraints, the first-order effect of removing sample $(x_i,y_i)$ is
\begin{equation}
\Delta\theta_{\mathrm{IF}} \;=\; -\,H^{-1}\,\frac{1}{n}\,g_i .
\end{equation}

% ==== Penalty IF ====
\noindent\textbf{Penalty-based IF.}
We approximate feasibility via a penalized surrogate
\begin{equation}
\tilde{L}(\theta)
=\tfrac{1}{2n}\|X\theta-y\|_2^2
+\tfrac{\rho}{2}\|A_{\mathrm{eq}}\theta-b_{\mathrm{eq}}\|_2^2
+\sum_{j=1}^{m} k\,\big((a_j^\top\theta-b_j)_+\big)^3 ,
\end{equation}
where $(t)_+=\max\{t,0\}$, $\rho,k>0$, and $a_j^\top$ is the $j$-th row of $A_{\mathrm{ineq}}$.
The Hessian at $\bar{\theta}$ is
\begin{equation}
H_{\mathrm{pen}}
= \tfrac{1}{n}X^\top X
+ \rho\,A_{\mathrm{eq}}^\top A_{\mathrm{eq}}
+ \sum_{j\in\mathcal{A}} 6k\,t_j\, a_j a_j^\top,
\qquad
t_j=a_j^\top\bar{\theta}-b_j,\quad
\mathcal{A}=\{j:\ t_j>0\}.
\end{equation}
The corresponding update is
\begin{equation}
\Delta\theta_{\mathrm{penIF}} \;=\; -\,H_{\mathrm{pen}}^{-1}\,\frac{1}{n}\,g_i .
\end{equation}

% ==== DIF ====
\noindent\textbf{Directional Influence Function (DIF).}
We enforce feasibility with a one-step constrained QP (linearized KKT):
\begin{equation}
\begin{aligned}
\Delta\theta_{\mathrm{DIF}}
=\arg\min_{\Delta\theta\in\mathbb{R}^d}\quad
& \tfrac{1}{2}\,\Delta\theta^\top H\,\Delta\theta\;-\;\tfrac{1}{n}\,g_i^\top \Delta\theta\\
\text{s.t.}\quad
& A_{\mathrm{eq}}\Delta\theta=0,\\
& A_{\mathrm{ineq}}\Delta\theta= 0
\;\;(\text{active constraints only}).
\end{aligned}
\end{equation}

\end{document}